\DeclarePairedDelimiter\abs{\lvert}{\rvert}
\DeclarePairedDelimiter\norm{\lVert}{\rVert}
\DeclareMathOperator*{\argmin}{arg\,min}
\newlength\mylen
\newtheorem{mydef}{Definition}
\newtheorem{lemma}{Lemma}
\newtheorem{theorem}{Theorem}
\newtheorem{assumption}{Assumption}
\newenvironment{customthm}[1]
  {\innercustomthm}
  {\endinnercustomthm}
\newcommand{\sgn}{\text{sgn}}
\title{\LARGE \bf
Learning from Demonstration without Demonstrations
}
\author{Tom Blau$^{*,\dagger,\diamond}$, Philippe Morere$^{\dagger}$, Gilad Francis$^{\dagger}$
\thanks{* Correspondence to: Tom Blau, {\tt\small tom.blau@data61.csiro.au}}%
\thanks{$^{\dagger}$ School of Computer Science, The University of Sydney, Australia}%
\thanks{$^{\diamond}$ CSIRO, Australia}%
}
\begin{document}

\maketitle

\begin{abstract}
State-of-the-art reinforcement learning (RL) algorithms suffer from high sample complexity, particularly in the sparse reward case.
A popular strategy for mitigating this problem is to learn control policies by imitating a set of expert demonstrations. The drawback of such approaches is that an expert needs to produce demonstrations, which may be costly in practice.
To address this shortcoming, we propose Probabilistic Planning for Demonstration Discovery (P2D2), a technique for automatically discovering demonstrations without access to an expert.
We formulate discovering demonstrations as a search problem and leverage widely-used planning algorithms such as Rapidly-exploring Random Tree to find demonstration trajectories. These demonstrations are used to initialize a policy, then refined by a generic RL algorithm.
We provide theoretical guarantees of P2D2 finding successful trajectories, as well as bounds for its sampling complexity.
We experimentally demonstrate the method outperforms classic and intrinsic exploration RL techniques in a range of classic control and robotics tasks, requiring only a fraction of exploration samples and achieving better asymptotic performance.
\end{abstract}

\setlength{\textfloatsep}{21pt}

\section{Introduction}
\label{sec:intro}
Reinforcement Learning (RL) studies how agents can learn a desired behaviour by simply using interactions with an environment and a return signal. 
While RL algorithms have enjoyed many recent successes in fields ranging from game playing to robotic control~\cite{mnih2015human,tan18sim}, a high computational cost is needed to find efficient control policies.
This issue is particularly severe when rewards are sparse, indicating only binary success ($1$) or failure ($0$ or $-1$). Since most rewards are identical, there is little gradient information to guide policy learning. 
Figure~\ref{fig:glorot_initialization} offers some insight into this phenomenon: exploration in regions where the return surface is flat leads to a random walk type search. This inefficient search continues until non-zero gradients are found, which can then be followed to a local optimum.

However, if the policy can be initialized to a region of parameter space where the gradient is informative (i.e. not flat), reinforcement learning can optimize the policy efficiently, as shown in Figure~\ref{fig:handpicked_initialization}. Learning from demonstration (LfD) is a popular class of algorithms that can achieve such initialization. In LfD, the training data is a set of demonstrations from an expert policy that exhibits the desired agent behaviour. Control policies can be learned from this data either by pure supervised learning~\cite{pomerleau1991efficient}, supervised learning followed by RL refinement~\cite{schaal1997learning}, or interleaving RL and supervision~\cite{ho2016generative}.
The drawback of LfD approaches is that they require expert demonstrations. The availability of such demonstrations is taken for granted in much of the LfD literature, and there is an implicit assumption that the cost of acquiring demonstrations is negligible. However, for many problems this acquisition is difficult and costly, possibly defeating the purpose of using LfD in the first place.

To address this issue, we propose to formulate the acquisition of demonstrations as a search problem in state space. The demonstration finding problem is then solved by planning algorithms, yielding a method for acquiring demonstrations automatically (without an external expert). Planning algorithms can achieve much better exploration performance than the random walk of Figure~\ref{fig:glorot_initialization} by taking search history into account~\cite{Lavalle98rapidly-exploringrandom}. These techniques are also often guaranteed to find a solution in finite time if one exists~\cite{Karaman2011}. Demonstrations found by planning algorithms are then used to pre-train RL policies, initializing them in regions of parameter space where the return gradient is informative.

\begin{figure*}
\centering
\begin{subfigure}{.5\textwidth}
  \centering
  \includegraphics[width=.85\linewidth]{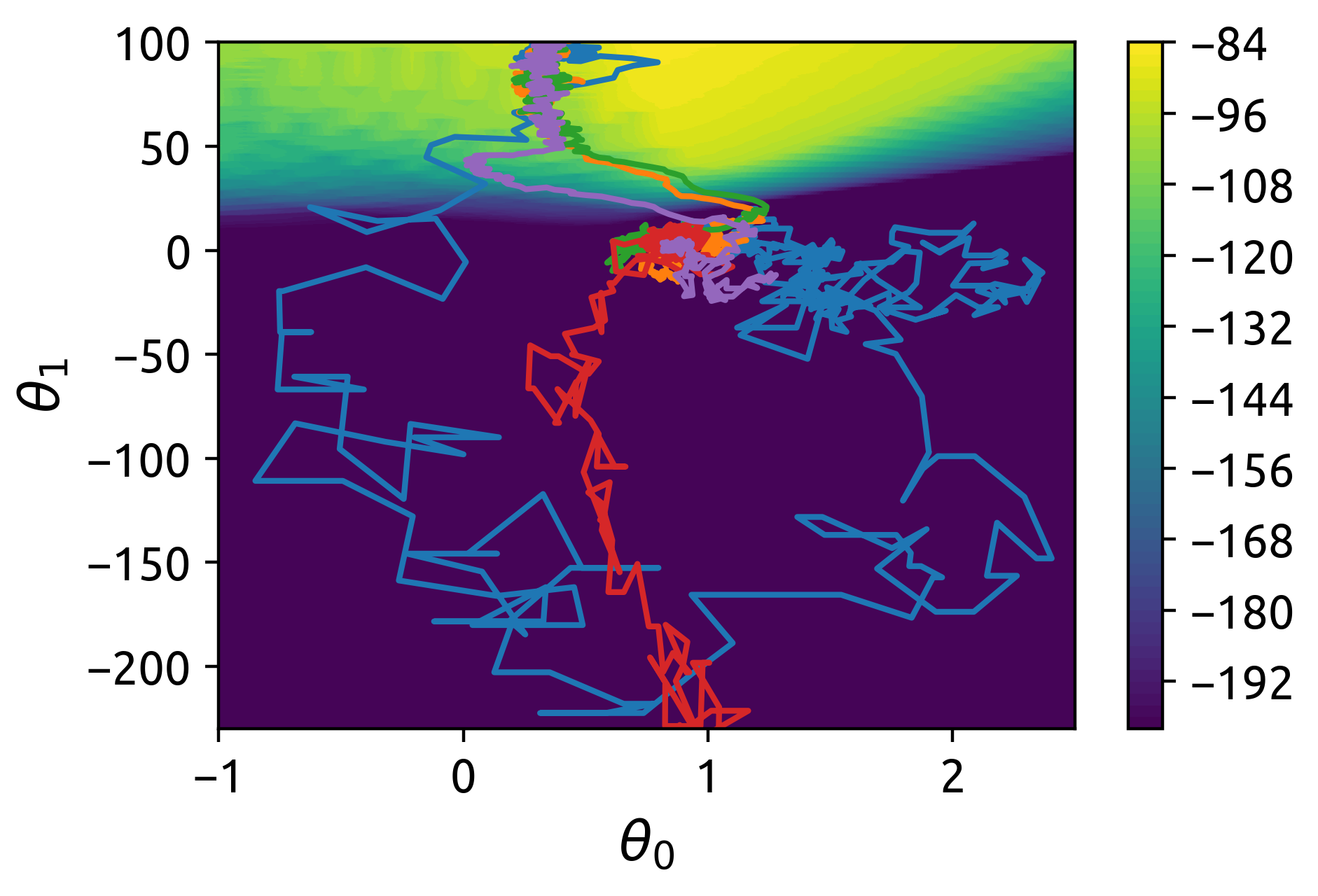}
  \vspace{-.5em}
  \caption{ }
  \label{fig:glorot_initialization}
\end{subfigure}%
\begin{subfigure}{.5\textwidth}
  \centering
  \includegraphics[width=.85\linewidth]{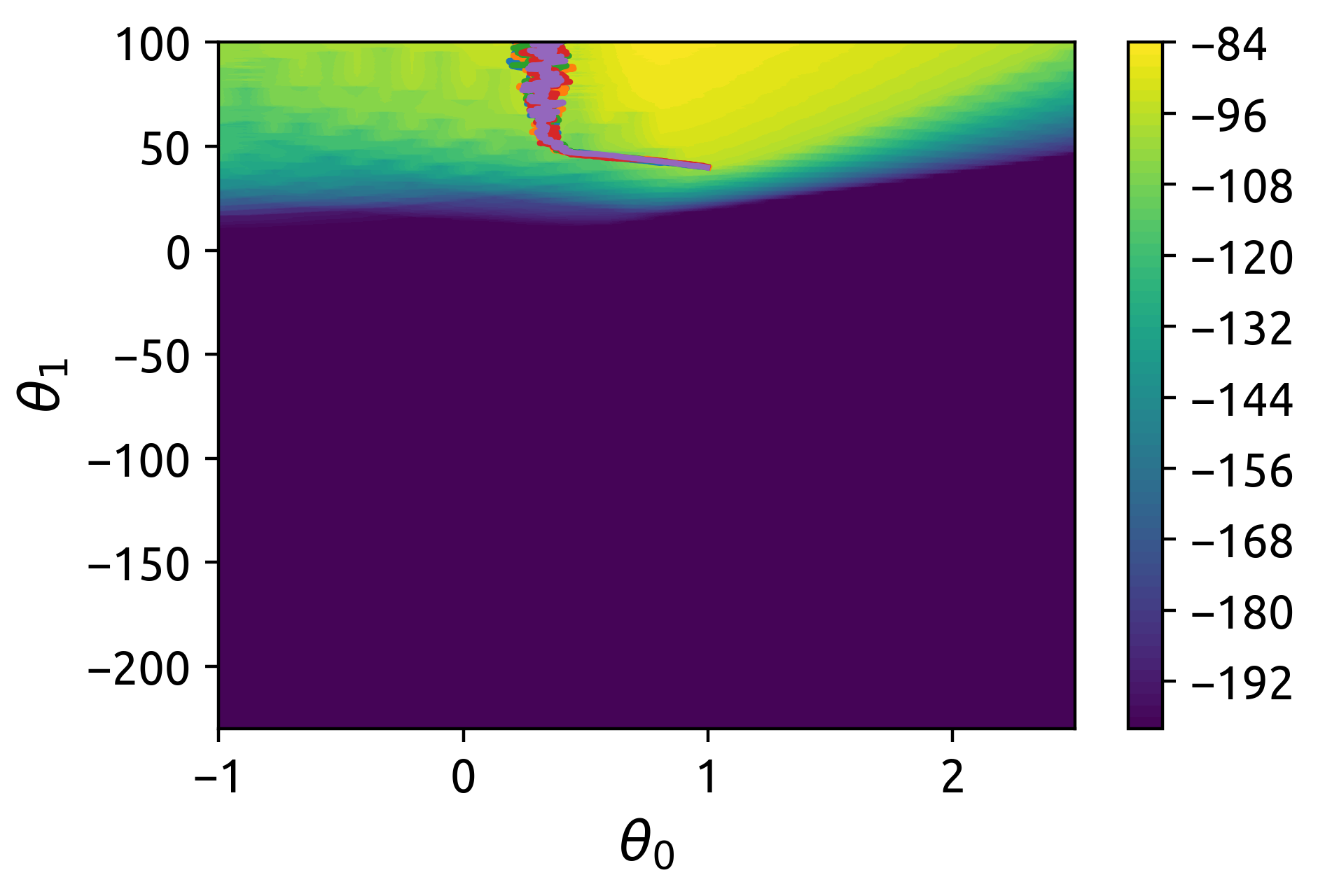}
  \vspace{-.5em}
  \caption{ }
  \label{fig:handpicked_initialization}
\end{subfigure}
\caption{Expected returns for linear policy with $2$ parameters $\theta_0, \theta_1$ on Sparse MountainCar domain (background). Gradient is $0$ in the dark blue area. Trajectories show the evolution of policy parameters over $1000$ iterations of TRPO, with $5$ random seeds. Same colors indicate the same random seeds.
(a) Random-walk type behaviour observed when parameters are initialized near the origin, following~\cite{glorot2010initialization}.
(b) Convergence observed when parameters are initialized in a region with gradients ($1,40$).
}
\label{fig:policy_initialization}
\end{figure*}
This paper brings the following contributions:
\begin{enumerate}
    \item We relax the implicit assumptions---common in the LfD literature---that demonstrations are readily available and that the cost of acquiring them is negligible.
    \item We propose a method, called Probabilistic Planning for Demonstration Discovery (P2D2), which automatically acquires demonstrations without the need for an external expert\footnote{Code is available at \url{https://gitlab.com/tomblau/p2d2}}. The demonstrations are then used in an LfD algorithm.
    \item We provide theoretical guarantees for finding successful demonstration trajectories and derive bounds for the sampling complexity of P2D2.
    \item Experimentally, we show LfD with P2D2 outperforms both classic and intrinsic exploration techniques in RL, even when accounting for the cost of acquiring demonstrations.
    \item Results demonstrate that P2D2 lowers the variance of policy gradient methods such as TRPO, and verify that initializing policies in regions with rich gradient information makes them less sensitive to initial conditions and random seed.
\end{enumerate}

The paper is structured as follows: Section~\ref{sec:related_work} discusses related work. The P2D2 algorithm and theoretical exploration guarantees are described in Section~\ref{sec:p2d2}, followed by experimental results and analysis in Section~\ref{sec:experiments}. Finally, Section~\ref{sec:conclusion} concludes and gives directions for future work.

\section{Related work}
\label{sec:related_work}

Techniques for learning policies from demonstration have been extensively studied~\cite{argall2009survey}. The most basic form of LfD is pure supervised learning to imitate the actions of an expert~\cite{pomerleau1991efficient, bojarski2016end}. Policies learned in this way generalize poorly to new states not seen during training, giving rise to approaches with an RL fine-tuning phase~\cite{benbrahim1997biped}. Other methods couple LfD and RL more tightly: reinforcement learning from demonstration~\cite{brys2015reinforcement} shapes the reward function based on the demonstrated trajectories, whereas Q-learning from demonstration~\cite{hester2018deep} seeks to learn the state-action value function of the expert.

Yet another class of LfD algorithms interleaves supervision and RL. Generative adversarial reinforcement learning~\cite{ho2016generative} applies an adversarial approach, where a discriminator model learns to distinguish between expert demonstrations and policy actions, while the policy learns to fool the discriminator. The BCO algorithm~\cite{torabi2018behavioral} uses only observations of the system state, learning to infer actions and improving the policy simultaneously. All of these LfD techniques rely on user-generated demonstrations or a-priori knowledge of environment parameters. In contrast, P2D2 \emph{automatically} discovers demonstrations, with no need of an external expert. To the best of our knowledge, it is the first method with this capability.

Exploration has long been an important topic in RL. Classic techniques typically rely on adding noise to actions~\cite{mnih2015human,schulman2015trust}, and perform very poorly in settings with sparse rewards.
Intrinsic motivation tackles this problem by defining a new reward to direct exploration. Many intrinsic reward definitions were proposed, based on information theory \cite{oudeyer2008can}, state visitation count \cite{lopes2012exploration,bellemare2016unifying,szita2008many,fox2018dora}, value function posterior variance \cite{osband2016deep, morere2018bayesian}, or model prediction error~\cite{stadie2015incentivizing,pmlr-v70-pathak17a}.
In the setting of continuous state and action spaces none of these approaches offer guarantees for the exploration of the state space, unlike P2D2.
Recent works try to offer exploration guarantees by adapting Bayesian optimization to RL~\cite{wilson2014using,vien2018bayesian},  but results are still limited to toy problems and specific policy model classes.

Motion planning in robotics is predominantly addressed with sampling-based methods. This type of approach offers a variety of methodologies for exploration and solution space representation (e.g., \textit{Probabilistic roadmaps} (PRM)~\cite{Kavraki1996}, \textit{Expansive space trees} (EST)~\cite{Hsu1997} and \textit{Rapidly-exploring random tree} (RRT)~\cite{kuffner2000rrt}), which have shown excellent performance in path planning in high-dimensional spaces under dynamic constraints \cite{LaValle2001, Hsu2002, Kavraki1996}.

RL was previously combined with sampling-based planning to replace core elements of planning algorithms, such as PRM's point-to-point connection~\cite{faust2018prm}, local RRT steering function~\cite{chiang2019rl} or RRT expansion policy~\cite{chen2019learning}, and even entire motion planners~\cite{jurgensonharnessing}. In contrast, the proposed method bridges the gap in the opposite direction, employing a sampling-based planner to discover demonstrations that kick-start RL algorithms and enhance their performance.

\section{Probabilistic Planning for Demonstration Discovery (P2D2)}
\label{sec:p2d2}
This work formulates finding demonstrations as a planning problem in the state space.
Unlike random walk RL exploration, planning algorithms such as Rapidly-exploring Random Tree (RRT) encourage uniform coverage of the search space and are probabilistically complete, i.e. guaranteed to find a solution~\cite{kleinbort2019}.
The presented algorithm, called P2D2, generates a collection of successful trajectories, which can be used instead of expert demonstrations in LfD algorithms.
We first provide preliminaries in Section~\ref{sec:preliminaries}, then present the RRT algorithm in Section~\ref{sec:rrt} and adapt it to the RL setting in order to achieve automatic demonstration discovery. Finally, theoretical guarantees for the proposed method are provided in Section~\ref{sec:exploration_guarantees}.

\subsection{Preliminaries}
\label{sec:preliminaries}
This work is based on the Markov Decision Process (MDP) framework, defined as a tuple ${<\mathcal{S},\mathcal{A},T,R,\gamma>}$. $\mathcal{S}$ and $\mathcal{A}$ are spaces of states $s$ and actions $a$ respectively. $T: \mathcal{S} \times \mathcal{A} \times \mathcal{S} \rightarrow [0,1]$ is a transition probability distribution so that $T(s_t,a_t,s_{t+1})=p(s_{t+1}|s_t,a_t)$, where subscript $t$ indicates the $t^{th}$ discrete timestep.
$R:\mathcal{S} \times \mathcal{A} \times \mathcal{S} \rightarrow \mathbb{R}$ is a reward function defining rewards $r_t$ associated with transitions $(s_t, a_t, s_{t+1})$. $\gamma \in [0,1)$ is a discount factor, and we denote the space of initial states $\mathcal{S}_0$. Solving a MDP amounts to finding the optimal policy $\pi^*$ maximizing the expected return $J(\pi^*) = \mathbb{E}_{T,\pi^*}[\sum^\infty_{t=0}\gamma^tR(s_t,a_t,s_{t+1})]$, where actions are chosen according to $a_t = \pi^*(s_t)$.

We add the following definitions to the MDP framework:

\begin{mydef}\label{def:transition}
A transition $(s_t, a_t, s_{t+1})$ is \emph{valid} if and only if $T(s_t,a_t,s_{t+1}) > 0$. That is, the transition is in the support of transition dynamics $T$.
\end{mydef}

\begin{mydef}\label{def:traj}
A valid trajectory is a sequence $\tau = \left[s_0, a_0, s_1,\ldots,s_{t_\tau-1}, a_{t_\tau-1}, s_{t_\tau}\right]$ such that $(s_t, a_t, s_{t+1})$ is a valid transition $\forall t \in \{0,1,\ldots,t_\tau-1\}$, and $s_0 \in \mathcal{S}_0$.
\end{mydef}
Additionally, whenever a goal space $\mathcal{S}_{goal} \subseteq  \mathcal{S}$ is defined for the MDP, we say a trajectory is \emph{successful} if its end state is within the goal space, i.e. $s_{t_\tau} \in \mathcal{S}_{goal}$.

\subsection{Planning in MDPs with RRT}
\label{sec:rrt}
The RRT algorithm~\cite{LaValle2001} provides a principled approach for planning in problems that cannot be solved directly (e.g. using inverse kinematics), but where it is possible to sample transitions.
Ordinarily, a planning problem given to an RRT is defined by a configuration space $\mathcal{C}$, a set of configurations that result in collision $\mathcal{C}_{obs}$, and the set of collision-free configurations $\mathcal{C}_{free} = \mathcal{C} \setminus \mathcal{C}_{obs}$. RRT and variant algorithms commonly assume that it is possible to travel in a straight line in $\mathcal{C}_{free}$, or that there is access to a \textit{steering function} that can move an agent between the end-points of such a line segment. In each iteration of the algorithm, RRT randomly samples a candidate point  $c_{rand} \in \mathcal{C}_{free}$ and tries to connect it to the nearest node in the tree $c_{near} \in \mathbb{T}$ via a straight line (typically what ends up being added to the tree is a point $c_{new}$ on the line $\overline{c_{near}c_{rand}}$).

In the MDP setting, the state space $\mathcal{S}$ replaces the configuration space $\mathcal{C}$, and obstacles are only defined implicitly by the transition dynamics $T$. In general it is not possible to travel along a straight line, and a steering function is usually not available. For these reasons, an RRT planning in MDPs can't connect nodes in a straight line. Instead, after sampling candidate state $s_{rand} \in \mathcal{S}$, the RRT will add a new node $s_{new}$ connected to $s_{near}$ by executing a random action $a_{rand} \in \mathcal{A}$ at the state $s_{near}$. The edge $(s_{near},s_{new})$ will store this action. If the newly added state satisfies $s_{new} \in \mathcal{S}_{goal}$, the algorithm is finished, and a successful trajectory can be generated by traversing up the tree from $s_{new}$ to $s_{root}$. A typical instance of this procedure is illustrated in Figure~\ref{fig:rrt_trajectory}.

The ability to execute this modified RRT algorithm in MDPs relies on two assumptions:
\begin{assumption}\label{as:1}
States can be sampled uniformly from the MDP state space $\mathcal{S}$.
\end{assumption}

Since $\mathcal{S}$ is typically a hyper-rectangle, sampling from it uniformly is trivial. Note that the sampled states don't have to be valid (i.e. collision-free). Even if $s_{rand}$ isn't valid, $s_{new}$ and $(s_{near},s_{new})$ will be valid because they are produced by the transition dynamics.

\begin{assumption}\label{as:2}
The environment state can be set to a previously visited state $s \in \mathbb{T}$ (e.g. by means of a simulator).
\end{assumption}
Although this assumption may seem limiting at first glance, it is already in use in the RL literature~\cite{florensa2017reverse,nair2018overcoming,ecoffet2019go}. Further, it lines up nicely with existing research on sim-to-real transfer~\cite{peng2018sim}: a policy can be trained in simulation, where the assumption is easily satisfied, then transferred to physical environments using sim-to-real techniques.

The proposed P2D2 algorithm, as summarized in Algorithm~\ref{alg:p2d2}, discovers a set of $N$ demonstrations ${\bm{\tau}=\{\tau_i\}_1^N}$ by repeatedly executing the above modified RRT with different initial states. LfD techniques can then be leveraged to learn an imitation policy $\pi_0$, which may be further refined using traditional RL algorithms such as TRPO~\cite{schulman2015trust}.

\begin{algorithm}[!b]
	\caption{\small Probabilistic Planning for Demonstration Discovery}
	\label{alg:p2d2}
	\begin{algorithmic}
        \STATE {\bfseries Input:} $\mathcal{S}_{goal}, N, \mathcal{M}$: MDP, $k$: sampling budget
        \STATE {\bfseries Input(optional):} $p_g$: goal sampling prob.
        \STATE {\bfseries Output:} ${\bm{\tau}=\{\tau_i\}_1^N}$: successful trajectories
        \STATE $\bm{\tau} \leftarrow \emptyset$
        \WHILE{$\abs{\bm{\tau}} < N$}
            \STATE $s_0 \leftarrow \text{sample initial state } s_0 \sim \mathbb{U}(\mathcal{S}_0)$        
            \STATE Initialise tree $\mathbb{T}$ with root $s_0$
            \FOR{$i = 1:k$}
                \STATE $s_{rand} \leftarrow \text{sample random state }s_{rand} \sim \mathbb{U}(\mathcal{S})$
                \IF{$u \sim \mathbb{U}(\left[0,1\right]) \leq p_g$}
                    \STATE $s_{rand} \leftarrow \text{sample } s_{rand}\sim \mathbb{U}(\mathcal{S}_{goal})$
                \ENDIF
        	    \STATE $s_{near} \leftarrow \text{find nearest node to $s_{rand}$ in $\mathbb{T}$}$
        	    \STATE $a \leftarrow \text{sample random action } a \sim \mathbb{U}(\mathcal{A})$
        	    \STATE $s_{new} \leftarrow \text{execute $a$ in state $s_{near}$}$
        	    \STATE Add $a$, node $s_{new}$ and edge ($s_{near},s_{new}$) to $\mathbb{T}$
            \ENDFOR
            \STATE $\tau \leftarrow \text{trajectory in $\mathbb{T}$ with max. cumulated reward}$
            \IF{$\tau$ is a successful trajectory}
                \STATE $\bm{\tau} \leftarrow \bm{\tau} \cup \tau$
            \ENDIF
        \ENDWHILE

    \end{algorithmic}
    
	    
	    
\end{algorithm}
\begin{figure}[t]
  \centering
  \includegraphics[width=.95\linewidth]{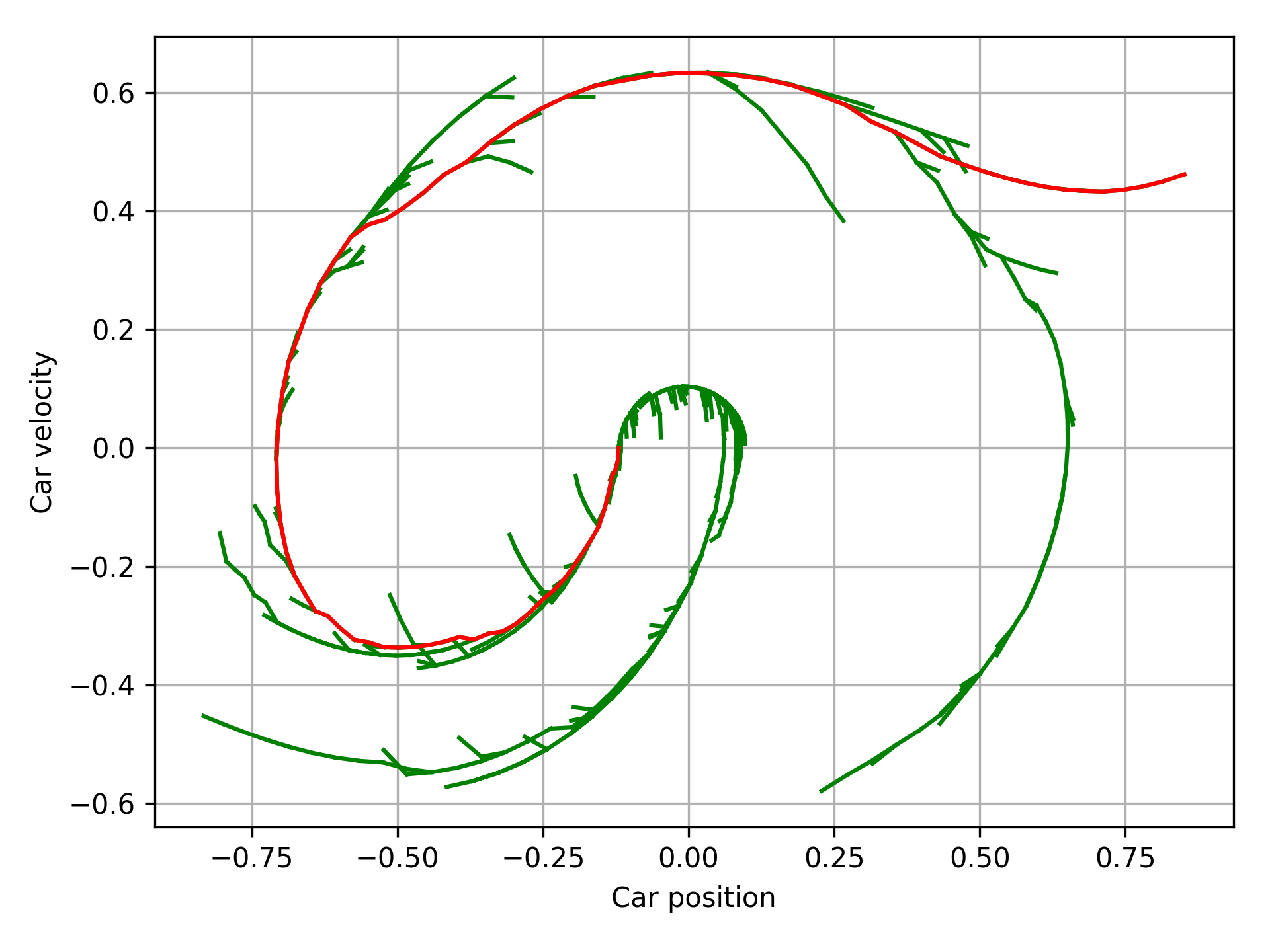}
  \captionof{figure}{Example of P2D2 on sparse MountainCar. Green segments are sampled transitions, executed in simulation. A successful solution found by P2D2 is displayed in red. State dimensions are normalized to $\left[-1,1\right]$.} \label{fig:rrt_trajectory}
\end{figure}

\subsection{Exploration guarantees}
\label{sec:exploration_guarantees}
The RL planning environment defines differential constraints of the form:
\begin{equation}\label{eq:diff_cont}
\dot{s} = f(s(t), a(t)), \quad s(t) \in \mathcal{S}, \quad a(t) \in \mathcal{A}.
\end{equation}
Therefore, starting at $s_0$, the trajectory $\tau$ can be computed by forward integrating equation~(\ref{eq:diff_cont}) with the applied actions.
As with many RL problems, $a(t)$ is time-discretized resulting in a piecewise constant control function. This means $\tau$ is constructed of $n_{\tau}$ segments of fixed time duration $\Delta t$ such that the overall trajectory duration $t_{\tau} = n_{\tau}\cdot \Delta t$. Thus, $a(t)$ is defined as $a(t)=a_i \in \mathcal{A}$ where $t \in [(i-1) \cdot \Delta t, i \cdot \Delta t)$ and $ 1\leq i \leq n_{\tau}$. Furthermore, as all transitions between states in $\tau$ are known, the trajectory return can be defined as $R_{\tau}=\sum^{n_{\tau}}_{t=0}\gamma^tR(s_t,a_t,s_{t+1})$. 

P2D2 explores in state-action space instead of policy parameter space. Furthermore, it is an effective exploration framework which provides \textit{probabilistic completeness} (PC):
\begin{mydef}\label{def:pc}
A probabilistically complete planner finds a feasible solution (if one exists) with a probability approaching 1 in  the limit of infinite samples. 
\end{mydef}

With the aforementioned dynamic characteristics, we prove that P2D2 under the RL setting is PC. This is in stark contrast to the random walk-like RL exploration, discussed in section \ref{sec:intro}, which is \emph{not} PC.
We begin with the following theorem, a modification of Theorem 2 from~\cite{kleinbort2019}, which is applied to kinodynamic RRT where a goal set $\mathcal{S}_{goal}$ is defined.
\begin{theorem}\label{lemma:rrt}
	Suppose that there exists a valid trajectory $\tau$ from $s_0$ to $\mathcal{S}_{goal}$ as defined in definition~\ref{def:traj}, with a corresponding piecewise constant control. The probability that P2D2 fails to reach $\mathcal{S}_{goal}$ from $s_0$ after $k$ iterations is bounded by $a e^{-bk}$, for some constants $a,b > 0$.
\end{theorem}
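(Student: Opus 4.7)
The plan is to adapt the probabilistic completeness argument for kinodynamic RRT of Kleinbort et al.\ to the P2D2 setting, where the two real differences from the original statement are that (i) the planner samples actions uniformly from $\mathcal{A}$ instead of using a steering function, and (ii) a goal set $\mathcal{S}_{goal}$ rather than a single goal configuration is targeted. The core idea is to cover the reference valid trajectory $\tau$ by a chain of overlapping balls and to lower-bound, uniformly in $k$, the probability that the tree $\mathbb{T}$ propagates from one ball to the next in a single iteration of the inner loop of Algorithm~\ref{alg:p2d2}.

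First, I would split $\tau$ into its $n_\tau$ piecewise-constant segments with waypoints $s_0, s_{\Delta t}, \ldots, s_{n_\tau \Delta t}$ and reference actions $a_1, \ldots, a_{n_\tau}$. I would choose a sufficiently small radius $r>0$ and define balls $B_i = B(s_{i \Delta t}, r)$ so that the resulting tube stays strictly inside the region where the dynamics are valid, and $B_{n_\tau} \subseteq \mathcal{S}_{goal}$. Standard Lipschitz regularity of $f$ in~(\ref{eq:diff_cont}) over a compact neighbourhood of $\tau$ (implicit in the cited RRT completeness result) then gives $\delta_s, \delta_a > 0$ such that whenever $s_{near} \in B(s_{i \Delta t}, \delta_s)$ and the sampled action lies in $B(a_{i+1}, \delta_a) \subseteq \mathcal{A}$, forward integration over $\Delta t$ lands the new state inside $B_{i+1}$. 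Taking $r \leq \delta_s$ in the initial cover ensures this ``ball-to-ball'' propagation holds from \emph{any} node the tree has placed in $B_i$.

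Next, I would lower-bound the per-iteration probability of one such propagation event. Conditional on $\mathbb{T}$ already containing a node $s^\star \in B_i$, uniform sampling of $s_{rand}$ over $\mathcal{S}$ selects $s^\star$ as the nearest neighbour with probability at least $p_{\mathrm{sel}} \geq \mathrm{vol}(B_i)/\mathrm{vol}(\mathcal{S})$ (with the goal-bias branch only helping once $i$ is close to $n_\tau$), while the uniform action lands in $B(a_{i+1}, \delta_a)$ with probability $p_{\mathrm{act}} = \mathrm{vol}(B(a_{i+1}, \delta_a))/\mathrm{vol}(\mathcal{A})$, both independent across iterations. Setting $\rho = p_{\mathrm{sel}} \cdot p_{\mathrm{act}} > 0$, each inner iteration independently advances the ``furthest reached ball index'' with probability at least $\rho$, so reaching $\mathcal{S}_{goal}$ requires at most $n_\tau$ successes among $k$ independent Bernoulli$(\rho)$ trials. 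A Chernoff bound on $\Pr[\mathrm{Bin}(k,\rho) < n_\tau]$ then gives the desired tail $ae^{-bk}$ for explicit constants $a = a(\rho, n_\tau)$ and $b = b(\rho) > 0$ independent of $k$, which is exactly the claim.

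The main obstacle will be the nearest-neighbour step: as $\mathbb{T}$ grows, the Voronoi cell of any particular node shrinks, so the lower bound on $p_{\mathrm{sel}}$ cannot be attached to a single node but must be argued at the level of the entire ball $B_i$, using the fact that \emph{any} node of $\mathbb{T} \cap B_i$ is an admissible ``anchor'' for the next step. Making this monotonicity argument precise — so that the per-iteration success probability remains bounded below by $\rho$ once $B_i$ has been entered — is the technical heart of the Kleinbort et al.\ result and carries over here essentially verbatim; the only adaptation is replacing their single-point goal test by membership in $\mathcal{S}_{goal}$, which is handled automatically by requiring $B_{n_\tau} \subseteq \mathcal{S}_{goal}$.
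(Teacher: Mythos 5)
Your proposal follows essentially the same route as the paper's proof in Appendix~\ref{app:B}: cover the reference trajectory with balls spaced $\Delta t$ apart, lower-bound the per-iteration probability of ball-to-ball propagation as the product of a state-sampling probability and an action-sampling probability (the paper's Lemmas on nearest-vertex selection and action choice), and conclude with a binomial model and a Chernoff tail bound giving $ae^{-bk}$. The nearest-neighbour obstacle you correctly flag as the technical heart is resolved in the paper exactly as you anticipate, via nested radii $\delta/5$, $2\delta/5$, $\delta$: a sample landing in the innermost ball forces whichever node is selected as nearest to lie in the outer ball, from which the action-sampling lemma guarantees propagation into the next ball.
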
 
The proof, which is a modification of Theorem 2 from \cite{kleinbort2019}, can be found in Appendix~\ref{app:B}. It should be noted that P2D2 does not require an explicit definition for $\mathcal{S}_{goal}$ in order to explore the space. While in some path planning variants of RRT, $\mathcal{S}_{goal}$ is used to bias sampling, the main purpose of $\mathcal{S}_{goal}$ is to indicate that a solution has been found. Therefore, $\mathcal{S}_{goal}$ can be replaced by another implicit success criterion. In the RL setting, this can be replaced by a return-related criterion.
\begin{theorem}\label{theorem:rrt_rl}
	Suppose there exists a trajectory with a return $R_{\tau} \geq \hat{R}, \hat{R} \in \mathbb{R}$. The probability that P2D2 fails to find a valid trajectory from $s_0$ with $R_{\tau} \geq \hat{R}$ after $k$ iterations is bounded by $\hat{a} e^{-\hat{b} k}$, for some constants $\hat{a},\hat{b} > 0$.
\end{theorem}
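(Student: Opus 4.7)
The plan is to reduce Theorem~\ref{theorem:rrt_rl} to Theorem~\ref{lemma:rrt} by converting the return-threshold success criterion into an implicit state-space reachability criterion, so that the exponential-decay machinery from Kleinbort et al.\ applies directly. The hypothesis supplies a reference valid trajectory $\tau^{*}=[s^{*}_{0},a^{*}_{1},s^{*}_{1},\ldots,s^{*}_{n^{*}}]$ with $R_{\tau^{*}}\geq \hat{R}$ and piecewise constant controls of duration $\Delta t$. The strategy is to build a ``success tube'' around $\tau^{*}$ whose occupancy implies the return criterion, and then invoke the PC bound of Theorem~\ref{lemma:rrt} on the terminal ball of that tube.

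First I would, for a parameter $\delta>0$ to be chosen, define balls $B_{i}=\{s\in\mathcal{S}:\|s-s^{*}_{i}\|\leq \delta\}$ for $i=0,\ldots,n^{*}$. Standard forward-flow estimates for the ODE in~(\ref{eq:diff_cont}) with piecewise constant controls show that for any $\delta$ there is an $\eta(\delta)>0$ such that starting from any $s\in B_{i}$ and applying $a^{*}_{i+1}$, the resulting successor lies within $B_{i+1}$ with positive probability bounded below (this is exactly the ``propagation lemma'' used in~\cite{kleinbort2019}). Next I would use continuity/Lipschitz-type regularity of $R$ in its arguments to obtain a modulus $\omega(\delta)$ such that any trajectory $\tilde{\tau}$ whose waypoints satisfy $\tilde{s}_{i}\in B_{i}$ has $R_{\tilde{\tau}}\geq R_{\tau^{*}}-\omega(\delta)$. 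Choosing $\delta$ small enough that $\omega(\delta)\leq R_{\tau^{*}}-\hat{R}$ (with strict slack available since the hypothesis is $R_{\tau^{*}}\geq\hat{R}$; if equality holds we instead perturb $\hat{R}$ infinitesimally and absorb the loss into the constants) guarantees every tube-shadowing trajectory meets the return criterion.

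With the tube in place, the statement ``P2D2 fails to produce a trajectory with $R_{\tau}\geq\hat{R}$ after $k$ iterations'' is implied by the statement ``P2D2 fails to grow a sequence of nodes $s_{i}\in B_{i}$ for $i=0,\ldots,n^{*}$ within $k$ iterations,'' since the max-reward trajectory extracted in Algorithm~\ref{alg:p2d2} dominates the shadowing one. Treating $B_{n^{*}}$ as $\mathcal{S}_{\mathrm{goal}}$, Theorem~\ref{lemma:rrt} then bounds the latter failure probability by $a e^{-bk}$, and I would set $\hat{a}=a$, $\hat{b}=b$ (or appropriate multiplicative adjustments if the per-stage covering probabilities are re-derived for the return-based tube).

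The main obstacle I anticipate is the reward-continuity step: the reduction is clean only when $R$ is Lipschitz in $(s,a,s')$, which is standard but not part of the stated MDP framework. For genuinely sparse indicator rewards the argument becomes even simpler, because staying in the tube automatically reproduces the same reward pattern as $\tau^{*}$, and no modulus $\omega(\delta)$ is needed; in intermediate cases (piecewise continuous $R$) one must ensure the tube does not straddle a discontinuity, which constrains $\delta$ but does not change the exponential form of the bound. A secondary subtlety is that the slack $R_{\tau^{*}}-\hat{R}$ could vanish, in which case the constants $\hat{a},\hat{b}$ blow up; this is best handled by restricting the statement to the case $R_{\tau^{*}}>\hat{R}$, which is generic.
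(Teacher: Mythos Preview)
Your reduction is sound, but it is not the route the paper takes. The paper's proof is a one-line state-augmentation: append the running return to the state, $s'_n=[s_n,\,R_{s_n}]^{\top}$, extend the distance metric by a reward coordinate, and set $\mathcal{S}^{RL}_{\mathrm{goal}}=\{(s,R_s)\mid s\in\mathcal{S}_{\mathrm{goal}},\,R_s\geq\hat{R}\}$. Theorem~\ref{lemma:rrt} is then invoked verbatim on the augmented problem, and $\hat{a},\hat{b}$ are simply the constants that theorem produces for the new goal set. No tube, no reward modulus $\omega(\delta)$, no slack analysis appears.

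The trade-offs are worth noting. Your approach makes the hidden regularity assumption explicit: for the augmented dynamics to satisfy the Lipschitz hypotheses underlying Theorem~\ref{lemma:rrt}, $R$ must itself be Lipschitz (or, as you observe, piecewise constant so that the tube reproduces the exact reward pattern). The paper's augmentation sweeps this into the ``we modify the distance metric'' sentence without comment, so in a sense you have identified a genuine technical prerequisite that the paper glosses over. On the other hand, your argument partially re-derives the ball-covering and propagation machinery that already lives inside the proof of Theorem~\ref{lemma:rrt}; once you are willing to change the goal set, you may as well change the state space too and invoke Theorem~\ref{lemma:rrt} as a black box, which is shorter and avoids the $R_{\tau^{*}}=\hat{R}$ edge case you had to handle separately.
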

\begin{proof}
The proof is straightforward. We augment each state in $\tau$ with the return for reaching it from $s_0$:
\begin{align}
s'_n &= \begin{bmatrix}
s_n  \\
R_{s_n}
\end{bmatrix}, \qquad \forall n=1:n_{\tau},
\end{align}
where $R_{s_n}=\sum^{n}_{t=0}\gamma^tR(s_t,a_t,s_{t+1})$. For consistency we modify the distance metric by simply adding a reward distance metric. With the above change in notation, we modify the goal set to $\mathcal{S}^{RL}_{goal}=\{(s,R_s) | s\in \mathcal{S}_{goal}, R_s \geq \hat{R} \}$, such that there is an explicit criterion for minimal return as a goal. Consequently, the exploration problem can be written for the augmented representation as $(\mathcal{S},s^{RL}_{0}, \mathcal{S}^{RL}_{goal})$, where $s^{RL}_{0} = [s_0, 0]^\top$. Theorem~\ref{lemma:rrt} satisfies that P2D2 can find a feasible solution to this problem within finite time, i.e. PC, and therefore the probability of not reaching $\mathcal{S}^{RL}_{goal}$ after $k$ iterations is upper-bounded by the exponential term $\hat{a} e^{-\hat{b} k}$, for some constants $\hat{a},\hat{b} > 0$
\end{proof}

We can now state our main result on the sampling complexity of the exploration process.
\begin{theorem}\label{theorem:rrt_complex}
	If trajectory exploration is probabilistically complete and satisfies an exponential convergence bound, the expected sampling complexity is finite and bounded s.t. 
	\begin{equation}
    \mathbb{E}[k] \leq \frac{\hat{a}}{4\sinh^{2}{\frac{\hat{b}}{2}}},
\end{equation}
where $\hat{a},\hat{b} > 0$.	
\end{theorem}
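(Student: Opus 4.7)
The plan is to turn the exponential failure-probability bound of Theorem~\ref{theorem:rrt_rl} into an expectation by a standard tail-sum argument, and then recognize the resulting geometric-type series in closed form. Let $K$ denote the (random) number of RRT iterations until P2D2 produces a trajectory with $R_{\tau} \geq \hat{R}$. Theorem~\ref{theorem:rrt_rl} gives the tail bound $\Pr[K > k] \leq \hat{a}\, e^{-\hat{b} k}$ for all integers $k \geq 0$, with $\hat{a},\hat{b} > 0$. Finiteness of $\mathbb{E}[K]$ is immediate from $\hat{b} > 0$ forcing exponential decay of the tail, so the real content is getting the \emph{particular} constant $\tfrac{\hat{a}}{4\sinh^{2}(\hat{b}/2)}$.

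Next I would write $\mathbb{E}[K]$ using the expectation-as-tail-sum identity for a non-negative integer random variable, either in the form $\mathbb{E}[K]=\sum_{k\geq 0}\Pr[K>k]$ or as $\mathbb{E}[K]=\sum_{k\geq 1} k\,\Pr[K=k]$, and dominate the relevant probabilities term-by-term by the exponential bound $\hat{a} e^{-\hat{b} k}$. The form of the stated answer strongly suggests the authors' intermediate step is
\begin{equation*}
\mathbb{E}[K]\;\leq\;\hat{a}\sum_{k=1}^{\infty} k\, e^{-\hat{b} k},
\end{equation*}
so I would arrange the bookkeeping (including any harmless index shift) so as to land exactly on this sum.

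The remaining step is a pure algebraic simplification that I would do using the standard identity $\sum_{k=1}^{\infty} k x^{k}=\tfrac{x}{(1-x)^{2}}$ with $x=e^{-\hat{b}}$, giving $\tfrac{e^{-\hat{b}}}{(1-e^{-\hat{b}})^{2}}$. Multiplying numerator and denominator by $e^{\hat{b}}$ rewrites the denominator as $e^{\hat{b}}-2+e^{-\hat{b}} = 2\cosh(\hat{b})-2 = 4\sinh^{2}(\hat{b}/2)$, which yields the claimed constant $\tfrac{1}{4\sinh^{2}(\hat{b}/2)}$ and hence the bound in the theorem.

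The main obstacle, and the only non-cosmetic step, is the accounting in the second paragraph: the hypothesis only directly controls $\Pr[K>k]$, whereas evaluating the sum in closed form needs probabilities (or tail weights) lined up against factors of $k$ at index $k$. Any sloppy index shift here introduces a spurious $e^{\hat{b}}$ factor that would mar the constant, so I would be careful to write $\Pr[K=k]\leq\Pr[K\geq k]$ (or $\Pr[K>k]$, as needed) and use the exponential bound at the matching index so that the final series is exactly $\sum_{k\geq 1} k e^{-\hat{b} k}$ and the $\sinh$ identity applies cleanly. Everything else is routine calculus.
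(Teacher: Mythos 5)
Your proposal is correct and follows essentially the same route as the paper: both bound $\mathbb{E}[k]$ by $\hat{a}\sum_{k=1}^{\infty}k e^{-\hat{b}k}$ and then evaluate that series in closed form as $\hat{a}/(4\sinh^{2}(\hat{b}/2))$, the only (cosmetic) difference being that you sum the series via the identity $\sum_{k\geq 1}kx^{k}=x/(1-x)^{2}$ while the paper differentiates $\sum_{k\geq 1}e^{-\hat{b}k}=1/(e^{\hat{b}}-1)$ with respect to $\hat{b}$. Your closing caveat about the index bookkeeping is well taken --- the paper itself passes from the tail bound $\Pr[K>k]\leq\hat{a}e^{-\hat{b}k}$ to the weighted sum without comment, so your explicit attention to that step only makes the argument more careful.
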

\begin{proof}
Theorem~\ref{theorem:rrt_rl} provides an exponential bound for the probability the planner fails in finding a feasible path. Hence, we can bound the expected number of iterations needed to find a solution, i.e. sampling complexity:
\begin{align}
    \mathbb{E}[k] \leq \sum^{\infty}_{k=1} k\hat{a} e^{-\hat{b} k} &= 
    \sum^{\infty}_{k=1} -\hat{a}\frac{d e^{-\hat{b} k}}{d\hat{b}}\\
    &= -\hat{a}\frac{d}{d\hat{b}}\sum^{\infty}_{k=1} e^{-\hat{b} k}\\
    &= -\hat{a}\frac{d}{d\hat{b}} \frac{1}{e^{\hat{b}}-1}\\
    &= \frac{\hat{a}}{4\sinh^{2}{\frac{\hat{b}}{2}}},
\end{align}
where we used the relation $\sum^{\infty}_{k=1} e^{-\hat{b} k} =\frac{1}{e^{\hat{b}}-1}$.
\end{proof}
It is worth noting that while the sample complexity is bounded, the above result implies that the bound varies according to problem-specific properties, which are encapsulated in the value of $\hat{a}$ and $\hat{b}$. Intuitively, $\hat{a}$ depends on the scale of the problem. It grows as $\abs{\mathcal{S}^{RL}_{goal}}$ becomes smaller or as the length of the solution trajectory becomes longer. $\hat{b}$ depends on the probability of sampling states that will expand the tree in the right direction. It therefore shrinks as the dimensionality of $\mathcal{S}$ increases. We refer the reader to Appendix~\ref{app:B} for more details on the meaning of $\hat{a},\hat{b}$ and the derivation of the tail bound in Theorem~\ref{lemma:rrt}.


\begin{figure*}[b]
\vspace{-0.5em}
\centering
\begin{subfigure}{.33\textwidth}
  \centering
  \includegraphics[width=\linewidth]{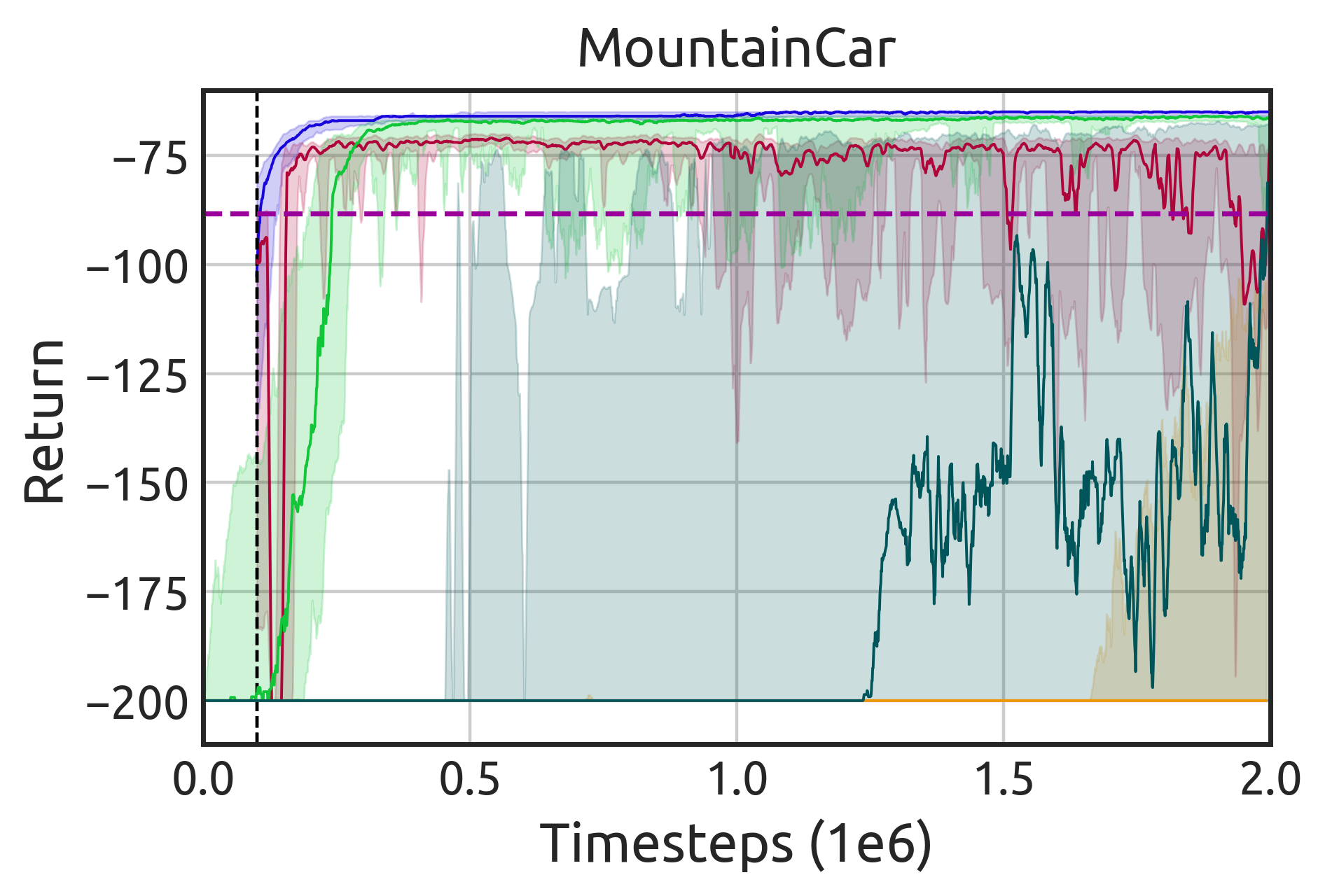}
\end{subfigure}%
\begin{subfigure}{.33\textwidth}
  \centering
  \includegraphics[width=\linewidth]{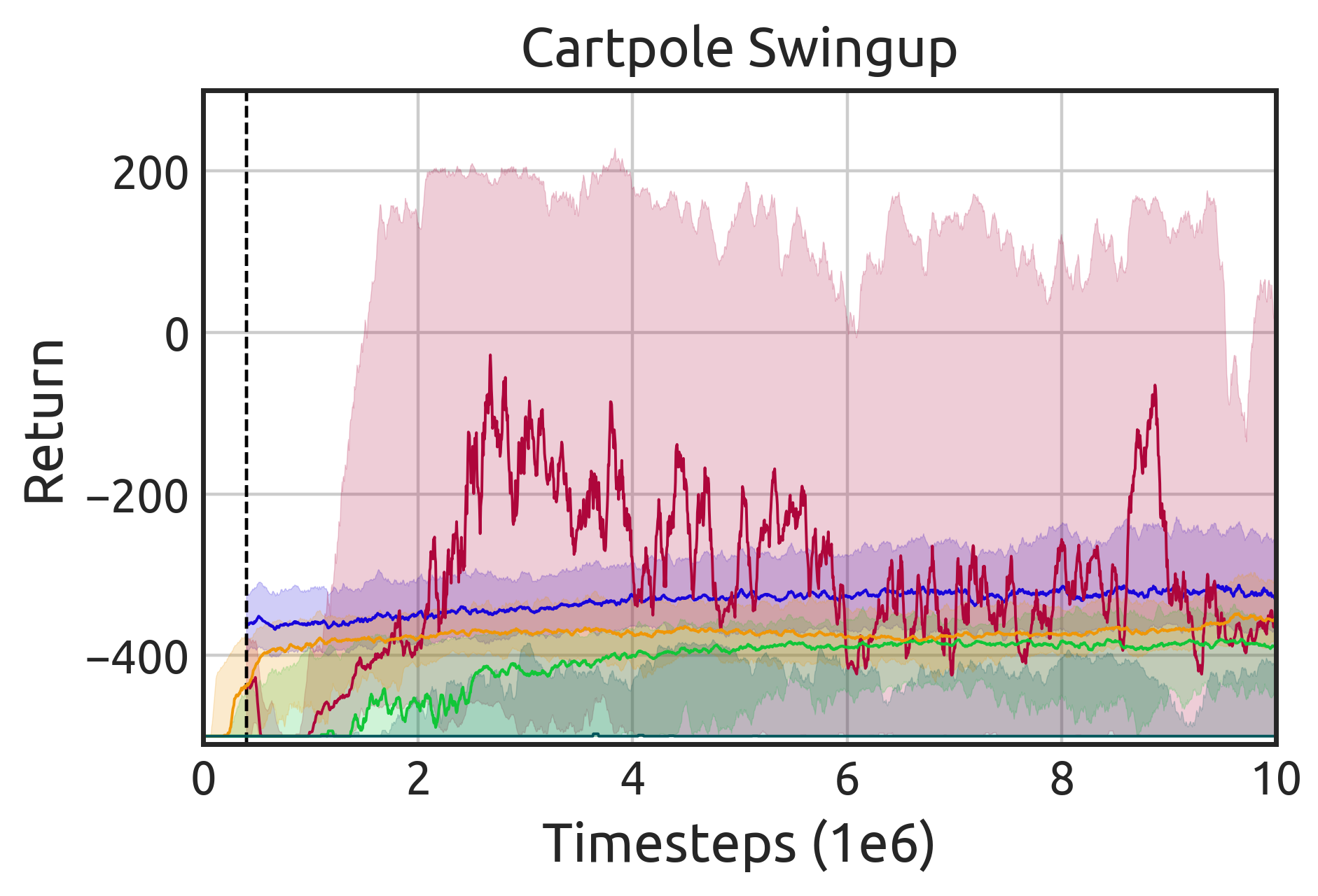}
\end{subfigure}%
\begin{subfigure}{.33\textwidth}
  \centering
  \includegraphics[width=\linewidth]{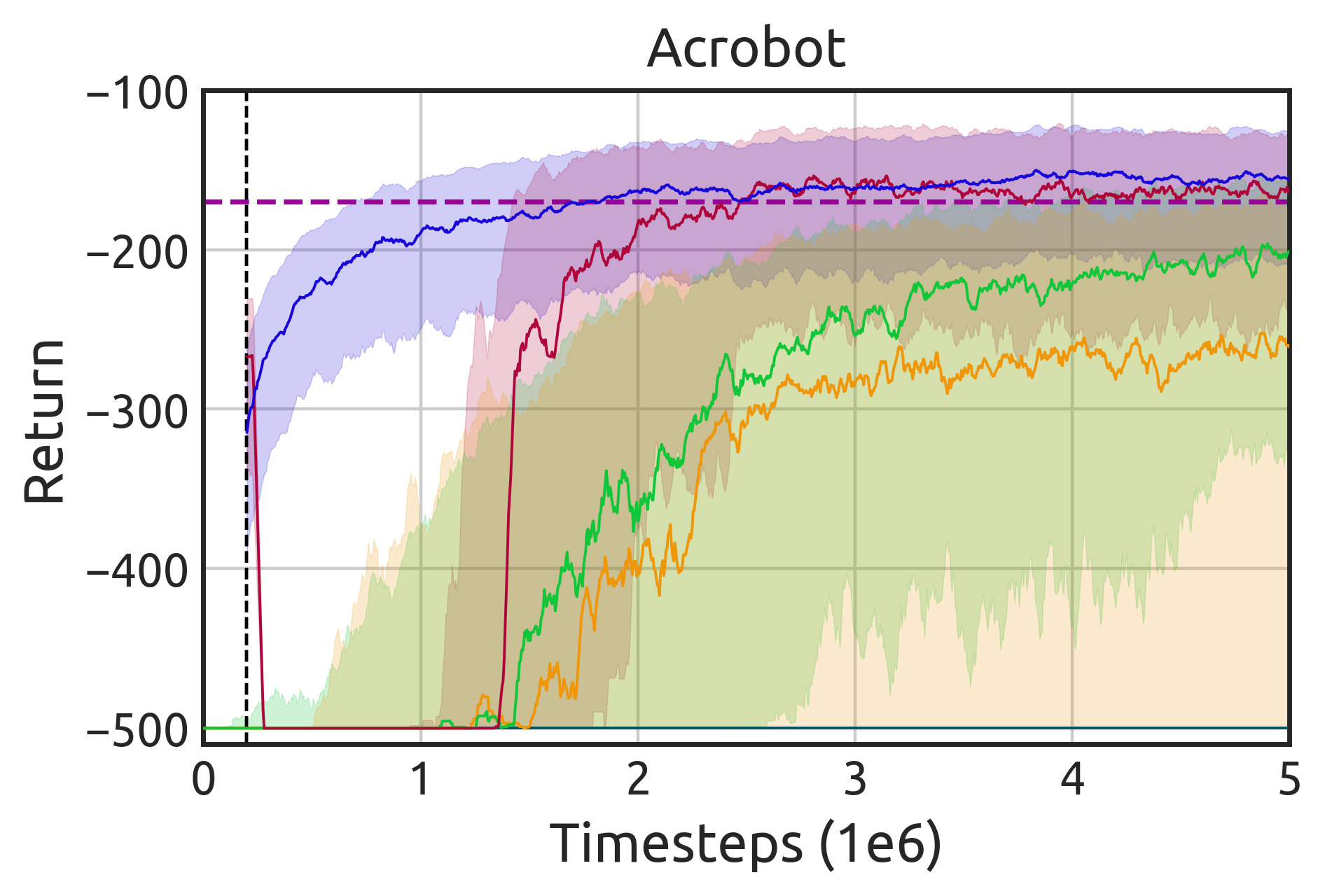}
\end{subfigure}
\begin{subfigure}{.33\textwidth}
  \centering
  \includegraphics[width=\linewidth]{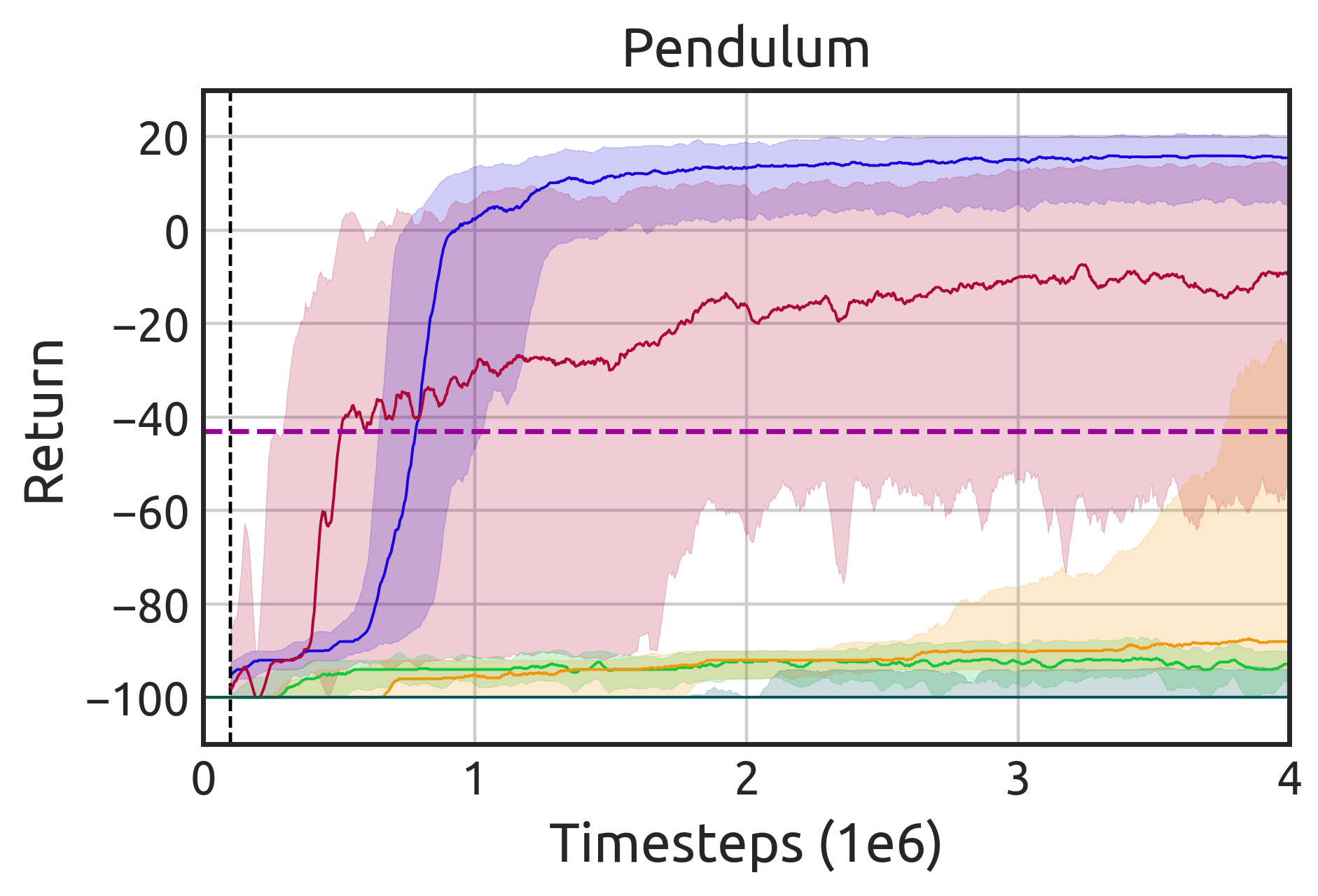}
\end{subfigure}%
\begin{subfigure}{.33\textwidth}
  \centering
  \includegraphics[width=\linewidth]{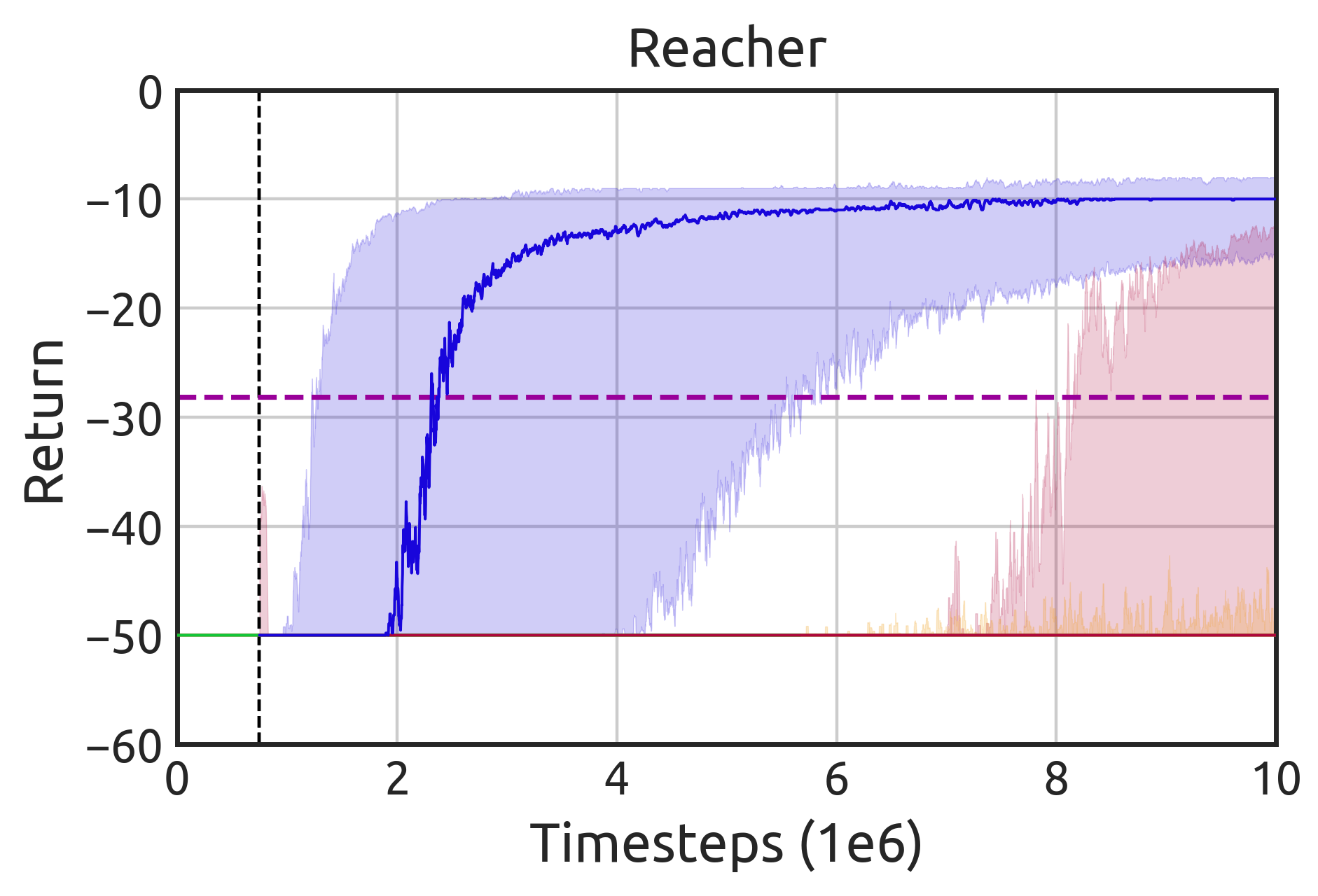}
\end{subfigure}%
\begin{subfigure}{.33\textwidth}
  \centering
  \includegraphics[width=\linewidth]{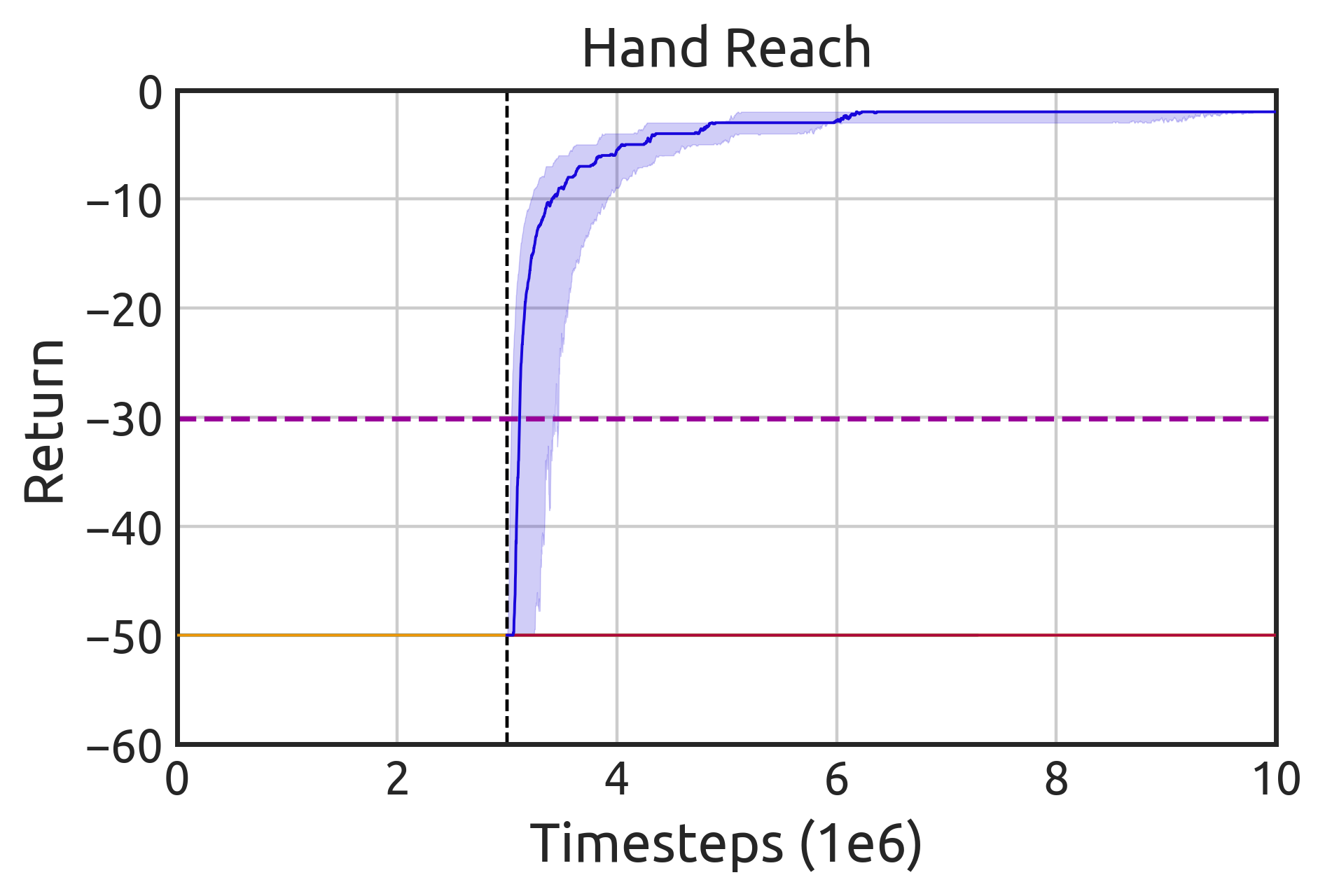}
\end{subfigure}
\begin{subfigure}{.33\textwidth}
  \centering
  \includegraphics[width=\linewidth]{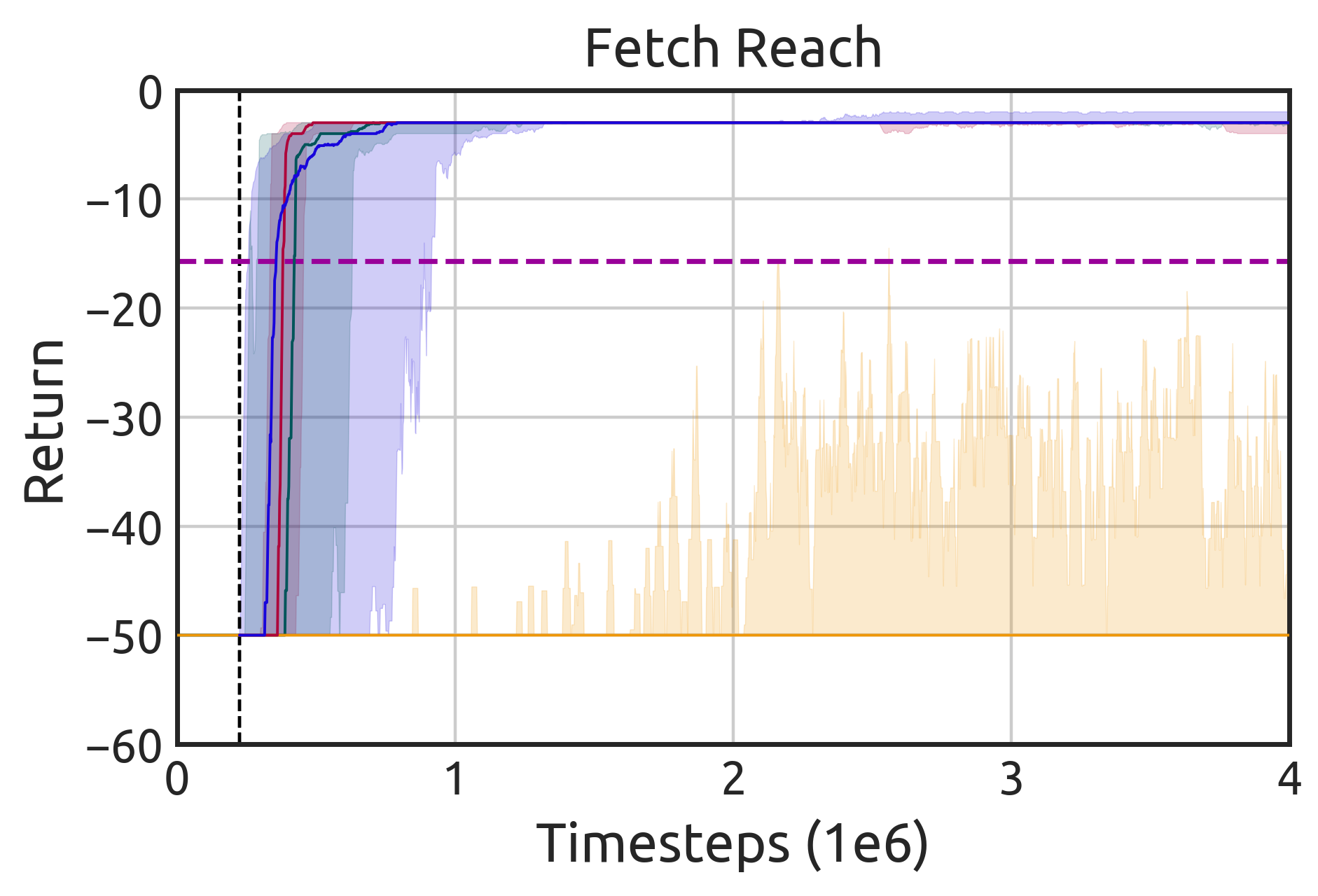}
\end{subfigure}%
\begin{subfigure}{.33\textwidth}
  \centering
  \includegraphics[width=\linewidth]{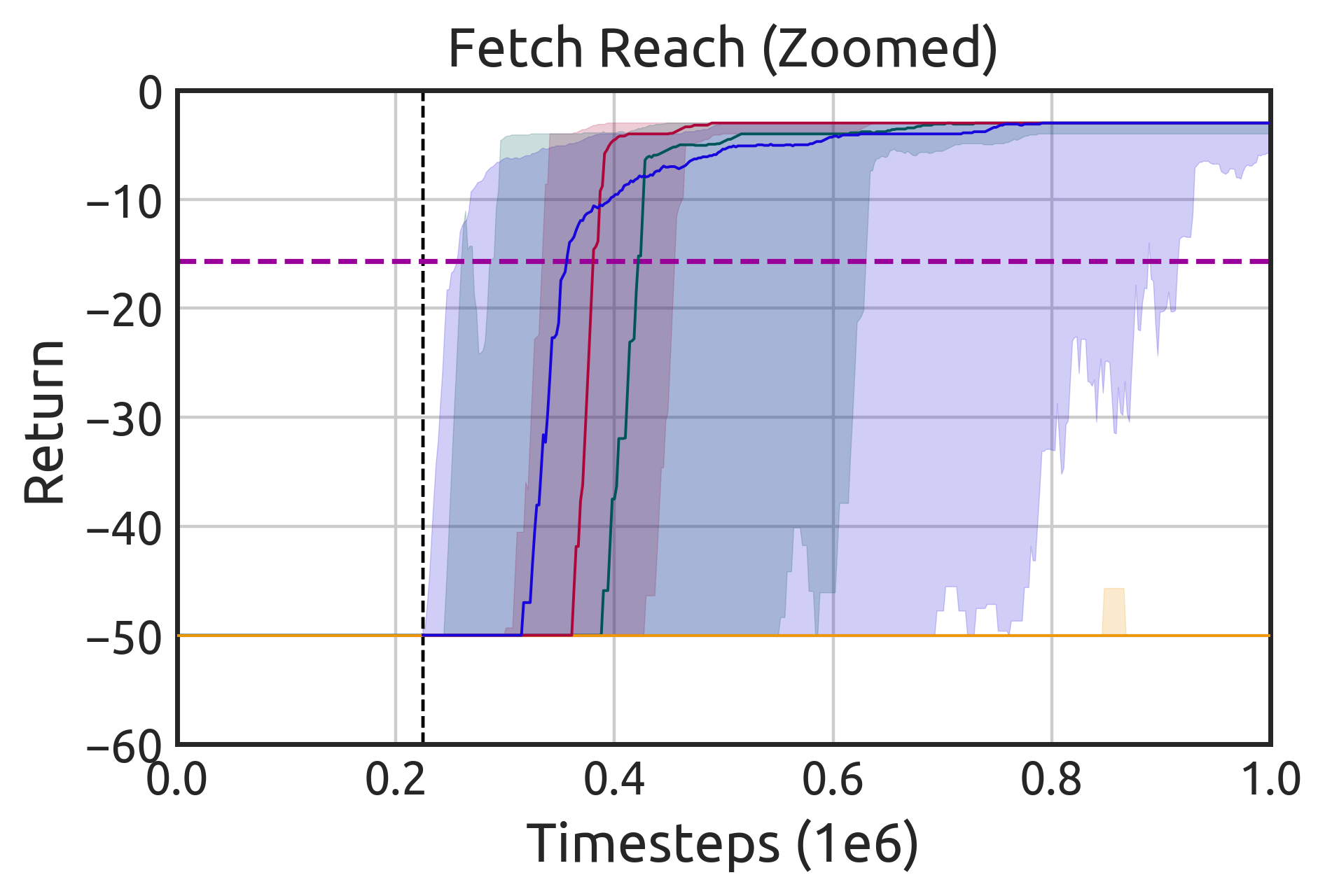}
\end{subfigure}%
\begin{subfigure}{.33\textwidth}
  \centering
  \includegraphics[width=\linewidth]{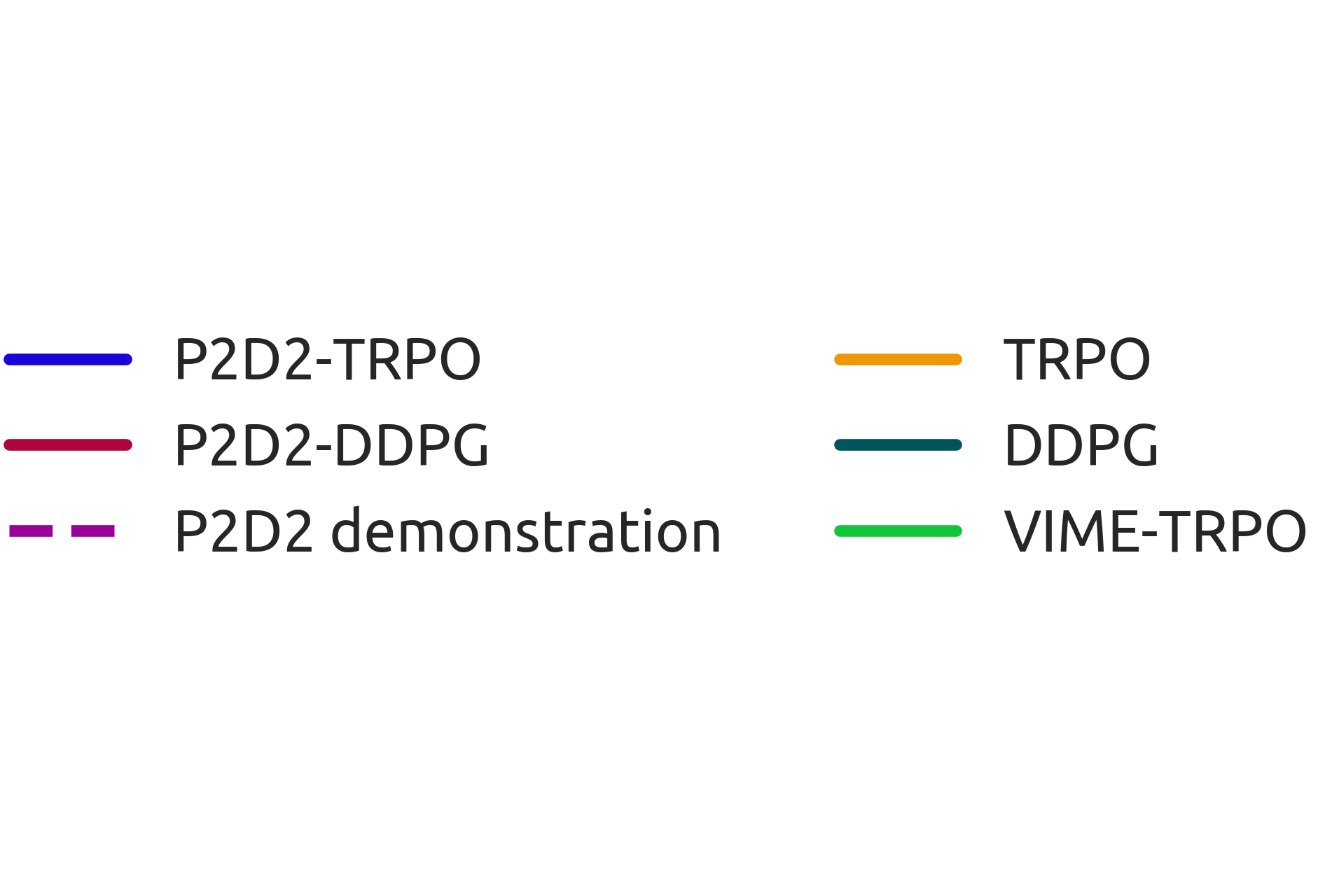}
\end{subfigure}
\caption{Results for classic control tasks, comparing our proposed method (P2D2-TRPO/DDPG), vanilla TRPO/DDPG, and VIME-TRPO. Trendlines are the medians and shaded areas are the interquartile range, taken over $10$ randomly chosen seeds. Also shown is the average undiscounted return of successful trajectories discovered by P2D2. The dashed offset at the start of P2D2-TRPO/DDPG reflects the number of timesteps spent on P2D2.}
\label{fig:results}
\end{figure*}

\section{Experiments}
\label{sec:experiments}


In this section, we investigate (i) whether P2D2 reduces the number of exploration samples needed to find good policies, compared with methods using classic and intrinsic exploration, (ii) how P2D2 can reduce the variance associated with policy gradient methods, and (iii) how P2D2 compares with standard LfD.
All experiments make use of the Garage~\cite{garage} and Gym~\cite{gym} frameworks. The experiments feature sparse-reward versions of the following tasks:
\emph{MountainCar} ($\mathcal{S} \subseteq \mathbb{R}^2, \mathcal{A} \subseteq \mathbb{R}$),
\emph{Pendulum} ($\mathcal{S} \subseteq \mathbb{R}^2, \mathcal{A} \subseteq \mathbb{R}$),
\emph{Cartpole Swingup} ($\mathcal{S} \subseteq \mathbb{R}^4, \mathcal{A} \subseteq \mathbb{R}$),
\emph{Acrobot} ($\mathcal{S} \subseteq \mathbb{R}^4, \mathcal{A} \subseteq \mathbb{R}$).
Experiments also include the following robotics tasks:
\emph{Reacher} ($\mathcal{S} \subseteq \mathbb{R}^6, \mathcal{A} \subseteq \mathbb{R}^2$)
\emph{Fetch Reach} ($\mathcal{S} \subseteq \mathbb{R}^{13}, \mathcal{A} \subseteq \mathbb{R}^4$),
and \emph{Hand Reach} ($\mathcal{S} \subseteq \mathbb{R}^{78}, \mathcal{A} \subseteq \mathbb{R}^{20}$).

\begin{figure*}[!t]
\centering
\begin{subfigure}{.33\textwidth}
  \centering
  \includegraphics[width=\linewidth]{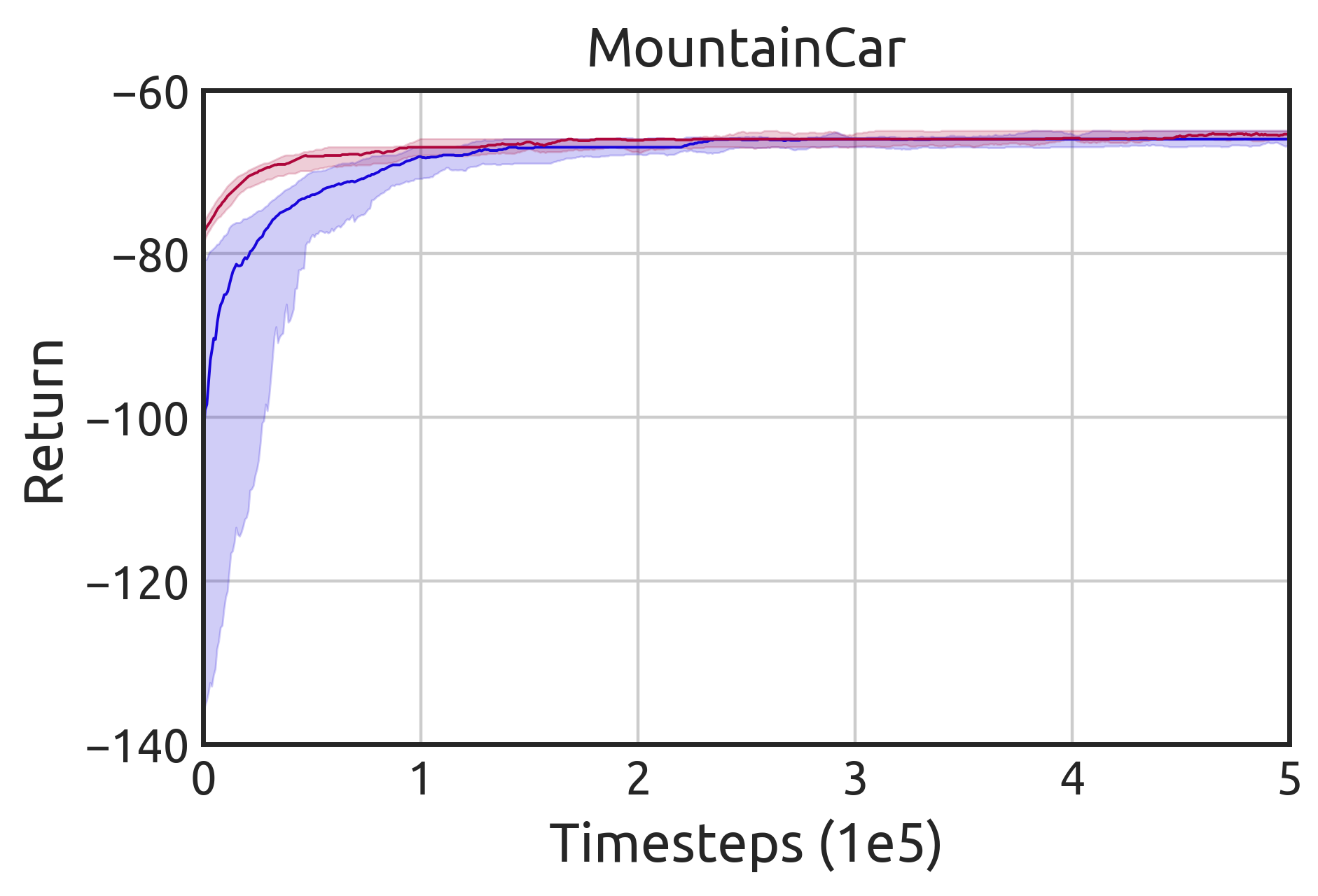}
\end{subfigure}%
\begin{subfigure}{.33\textwidth}
  \centering
  \includegraphics[width=\linewidth]{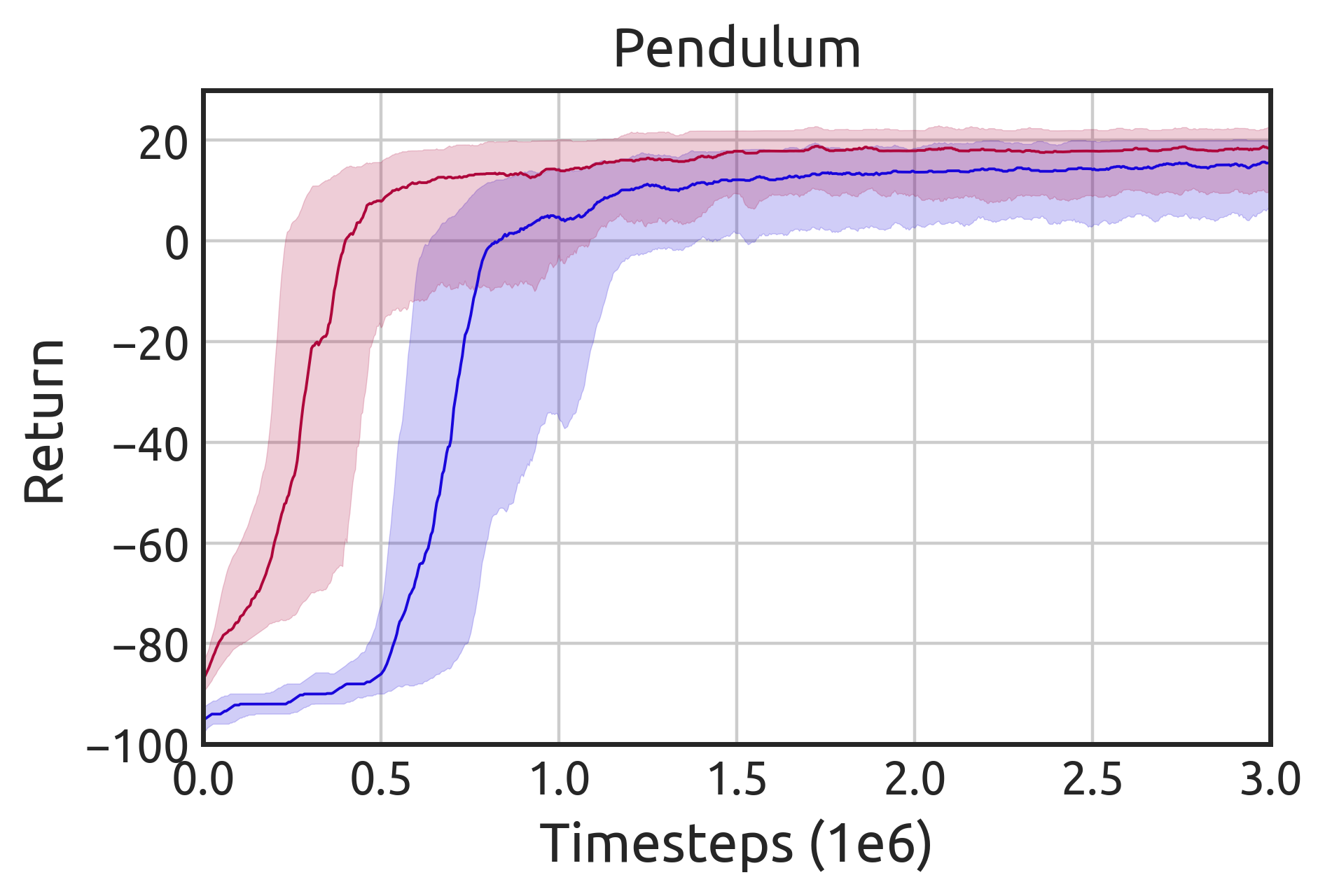}
\end{subfigure}
\begin{subfigure}{.33\textwidth}
  \centering
  \includegraphics[width=\linewidth]{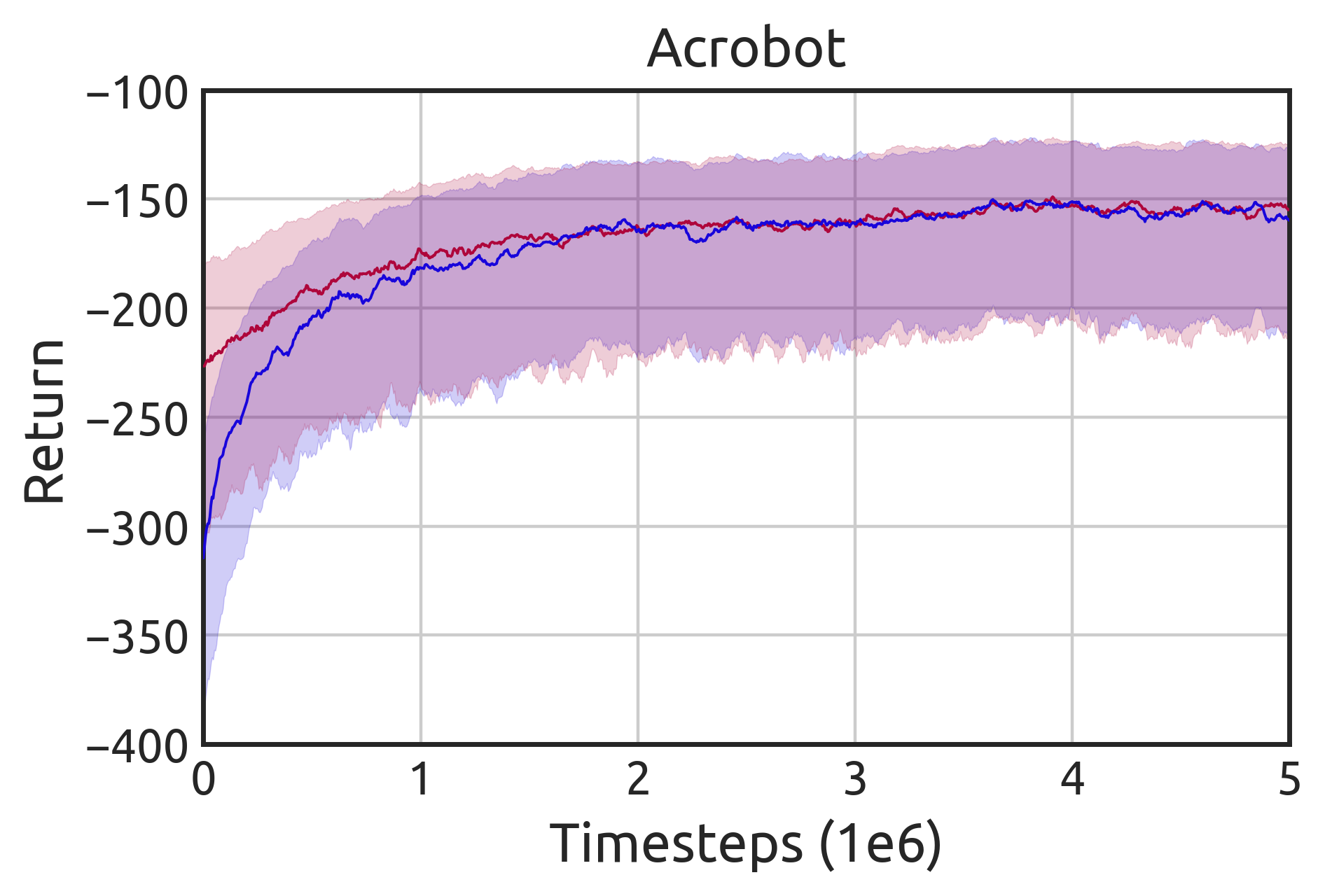}
\end{subfigure}\\
\begin{subfigure}{.4\textwidth}
  \centering
  \includegraphics[width=\linewidth]{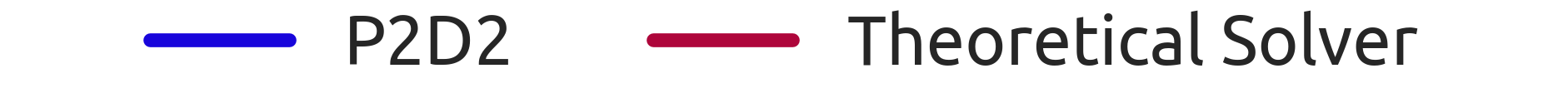}
\end{subfigure}%
\caption{Comparison of RL initialized with automatically discovered demonstrations (P2D2) and expert demonstrations (Theoretical Solver). Trendlines are medians and shaded areas are interquartile ranges aggregated over $10$ random seeds. Results show that degradation due to P2D2 demonstrations is small, and the P2D2-initialized agents catch up to the baseline agents before convergence occurs. }
\label{fig:demonstration_results}
\end{figure*}

\textbf{Comparison to classic and intrinsic exploration on RL benchmarks}.
We compare P2D2 with state-of-the-art RL algorithms on several RL benchmarks. Performance is measured in terms of undiscounted returns and aggregated over 10 random seeds, sampled at random for each environment. We focus on domains with sparse rewards, which are notoriously difficult to explore for traditional RL methods (cf. Appendix~\ref{sec:sup:environments}).
Note that in all of the following experiments, P2D2 utilises basic RRT to discover demonstrations, and a simple LfD technique (supervised learning) is used. However, P2D2 is compatible with more sophisticated versions of RRT and LfD, which would most likely increase performance if used.
Our experiments focus on the widely-used methods TRPO~\cite{schulman2015trust} and DDPG~\cite{lillicrap2015continuous}. P2D2-TRPO and P2D2-DDPG are compared to the baseline algorithms with Gaussian action noise. As an additional baseline we include VIME-TRPO~\cite{houthooft2016vime}. VIME is an exploration strategy based on maximizing information gain about the agent's belief of the environment dynamics. It is included to show that P2D2 can improve on state-of-the-art exploration methods as well as naive ones, even though the return surface for VIME-TRPO is no longer flat, unlike Figure~\ref{fig:policy_initialization}.
The exact experimental setup is described in Appendix~\ref{sec:sup:experimental_setup}.

P2D2 is first run to compute training trajectories for all environments.
The number of environment interactions during this phase is accounted for in the results, displayed as an offset with a vertical dashed black line. The average performance achieved by these trajectories is also reported as a guideline, with the exception of \emph{Cartpole Swingup} where doing so does not make sense. RL is then used to refine a policy pretrained with these trajectories.

Figure~\ref{fig:results} shows the median and interquartile range for all methods. P2D2 outdoes both vanilla and VIME baselines. It converges faster and achieves higher performance at the end of the experiment. In most cases, the upper quartile for our method begins well above the minimum return, indicating that P2D2 and pre-training are able to produce successful though not optimal policies. For the majority of problems, P2D2-DDPG performance starts significantly above the minimum return, plunges due to the inherent instability of DDPG, but eventually recovers, indicating that P2D2 pre-training can help mitigate the instability.

It is worth noting that P2D2's lower quartile is considerably higher than that of baselines. Indeed, for many of the baselines the lower quartile takes a long time to improve on the minimum return, and in some cases it never manages to do so at all. This is a common problem in sparse reward RL, where there is no alternative but to search the state space randomly until the first successful trajectory is found, and only then is an informative reward signal received.
While a few random seeds will by chance find a successful trajectory quickly (represented by the quickly rising upper quartile), others take a long time (represented by the much slower rise of the median and lower quartile). In other words, P2D2-TRPO/DDPG is much more robust to random policy initialization and to random seeds than standard RL methods. This is because P2D2 is able to use automatically discovered demonstrations to initialize policy parameters to a region with informative return gradients.

\textbf{Comparison to learning from expert demonstrations}.
Since the demonstrations produced by P2D2 have no guarantee of optimality, it is reasonable to expect that a policy learned from them will be worse than a policy learned from expert demonstrations. This degradation is assessed in Figure~\ref{fig:demonstration_results}. Experiments were conducted on the \emph{MountainCar}, \emph{Pendulum} and \emph{Acrobot} tasks for which a theoretical solver is readily available (described in Appendix~\ref{sec:sup:theoretical_solvers}). The RL algorithm used was TRPO in all cases. As the figure shows, the P2D2 agent has worse initial performance than the baseline agent, but quickly catches up. The performance difference becomes statistically insignificant \emph{before} either agent converges. This contrasts with the previous set of experiments, where P2D2 agents converge long before other agents learning without demonstrations. The difference between learning from P2D2 demonstrations and learning from expert demonstrations is therefore very small when compared to the difference between learning from P2D2 and learning from scratch. These results suggest that in a LfD context, the quality of P2D2 demonstrations is relatively close to that of expert demonstrations.

\section{Conclusion}
\label{sec:conclusion}
We proposed Probabilistic Planning for Demonstration Discovery (P2D2), a paradigm for leveraging planing algorithms to automatically discover demonstrations, which can be converted to policies then refined by classic RL methods. This is a relaxation of the common assumption made in the LfD literature, which presupposes that either demonstrations or an expert are available. We provided theoretical guarantees of P2D2 finding solutions, as well as sampling complexity bounds. Empirical results show that P2D2 outperforms classic and intrinsic exploration techniques, requiring only a fraction of exploration samples and achieving better asymptotic performance. This holds true even when accounting for the cost of acquiring demonstrations.
Finally, experiments indicate that the degradation caused by using demonstrations from P2D2 rather than from an expert is mild, especially in comparison to the discrepancy between P2D2 and RL from scratch.

In future work P2D2 could be extended to real-world problems by using sim-to-real methods, as mentioned in Section~\ref{sec:rrt}.
Exploration in high-dimensional tasks is also challenging, as stated in Theorem~\ref{theorem:rrt_complex} and confirmed experimentally by increased P2D2 timesteps. Exploiting external prior knowledge and/or the structure of the problem can benefit exploration in high-dimensional tasks, and help make P2D2 practical for problems such as Atari games.
Lastly, all the results in this paper were achieved using basic RRT and supervised LfD. Recent advances in RRT~\cite{chiang2019rl} and LfD~\cite{torabi2018behavioral} could therefore be applied to significantly improve P2D2.

\bibliographystyle{IEEEtran}
\bibliography{bibliography}

\clearpage
\begin{center}
\textbf{\LARGE Learning from Demonstration without Demonstrations: Supplementary Material}
\end{center}
\setcounter{equation}{0}
\setcounter{figure}{0}
\setcounter{table}{0}
\setcounter{page}{1}
\setcounter{section}{0}
\makeatletter
\renewcommand{\theequation}{S\arabic{equation}}
\renewcommand{\thefigure}{S\arabic{figure}}
\renewcommand{\thesection}{S\arabic{section}}
\renewcommand{\thesubsection}{\thesection-\Alph{subsection}}


\section{RRT algorithm psuedo-code}
\label{sec:sup:rrt}
In this section, we provide pseudo-code of the classic RRT algorithm. 

The basic form of RRT, used for path planning, attempts at every iteration to extend its tree $\mathbb{T}$ by adding a new vertex, which is biased by a randomly-selected state $s_{rand}$. The tree is expanded by selecting the nearest vertex of the tree, $s_{near}$,  to the newly sampled state  RRT then attempts to add a new vertex $s_{new}$ to the tree by applying an input $a$. This process biases the RRT to rapidly explore, which results in a uniform coverage of the state space \cite{Lavalle98rapidly-exploringrandom}. 

 Note that all trajectories represented by $\mathbb{T}$ are valid, as they are grown from valid transitions between states in $\mathcal{S}$. The robotics path planning community often adds explicit collision validity checks for $s_{rand}$ and $s_{new}$. However, the basic form of RRT~\cite{Lavalle98rapidly-exploringrandom}, does not require these checks, as expansion of the tree toward $s_{new}$ is inherently valid given transition dynamics. In RL problems, validity might extend beyond collision checks (e.g. not all random images are valid states of an Atari game), but the transition dynamics still guarantee $s_{new}$ will be valid.

\begin{algorithm}[h]
    \caption{Rapidly-exploring Random Trees (RRT)}
	\label{algo:rrt}
	\begin{algorithmic}
       \STATE {\bfseries Input:} $s_{init}$, $k$: sampling budget
       \STATE {\bfseries Output:} $\delta t$: Euler integration time interval

       \STATE $\mathbb{T}$.init($s_{init}$)
       \FOR{$i = 1:k$}
       \STATE $s_{rand} \leftarrow \text{RANDOM\_UNIFORM}(S)$
	   \STATE $s_{near} \leftarrow \argmin_{s \in \mathbb{T}}\norm{s_{rand}-s_{near}}$ \COMMENT{Find nearest vertex}
	   \STATE $a \leftarrow \Upsilon(s_{near}, s_{rand})$ \COMMENT{Sample action}
	   \STATE $s_{new} \leftarrow s_{near}+\Delta t \cdot f(s_{near}, a)$ \COMMENT{Propagate to new state, Equation~(\ref{eq:diff_cont})}
	   \STATE $\mathbb{T}$.add\_vertex($s_{new}$)
	   \STATE $\mathbb{T}$.add\_edge($s_{near},s_{new}$)
	   \ENDFOR
    \end{algorithmic}
	
    
	    



	    
\end{algorithm} 

\section{Proof of Theorem~\ref{lemma:rrt}}\label{app:B}
This appendix proves Theorem 1, which shows that planning using RRT under differential constraints is probabilistically complete. The following proof is a modification of Theorem 2 from \cite{kleinbort2019}, where completeness of RRT in the RL setting is maintained without the need to explicitly sample a duration for every action.

Equation~\ref{eq:diff_cont} defines the environment's differential constraints. 
In practice, the equation is approximated by an Euler integration step, with the interval $[0,t_{\tau}]$ divided into $l >> n_{\tau}$ equal time intervals of duration $h$ with $t_{\tau}=l\cdot h$. The valid transition between consecutive time steps is given by:
\begin{equation}\label{eq:Euler}
    \begin{aligned}
    &s_{n+1} = s_{n}+f(s_n, a_n) \cdot h, \qquad s_{n},s_{n+1}\in \tau,\\
    &s.t. \lim\limits_{l \rightarrow \infty, h \rightarrow 0} \norm{s_n - \tau(n \cdot h)} = 0.
    \end{aligned}
\end{equation}
Furthermore, we define $\mathcal{B}_r(s)$ as a ball with a radius $r$ centered at $s$ for any given state $s \in \mathcal{S}$.

We assume that the planning environment is Lipschitz continuous in both state and action, constraining the rate of change of Equation~(\ref{eq:Euler}). Formally, there exist positive constants $K_s, K_a >0$, such that $\forall s_0,s_1 \in \mathcal{S}, a_0,a_1\in \mathcal{A}:$
\begin{align} 
\norm{f(s_0, a_0)-f(s_0, a_1)} &\leq K_a\norm{a_0-a_1}, \\ 
\norm{f(s_0, a_0)-f(s_1, a_0)} &\leq K_s\norm{s_0-s_1}.
\end{align}

\begin{lemma}\label{lemma:sup:traj_similarity}
Let $\tau$, $\tau'$ be trajectories where $s_0=\tau(0)$ and $s'_0=\tau'(0)$ such that $\norm{s_0-s'_0} \leq \delta_s$, with $\delta_s$ a positive constant. Suppose that for each trajectory a piecewise constant action is applied, so that $\Upsilon(t) =a$ and $\Upsilon'(t) =a'$ is fixed during a time period $T \geq 0$. Then 
$\norm{\tau(T)-\tau'(T)} \leq e^{K_sT}\delta_s + TK_a e^{K_sT}\norm{a-a'}$.
\end{lemma}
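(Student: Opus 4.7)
The plan is to treat both $\tau$ and $\tau'$ as continuous-time solutions of the ODE in~(\ref{eq:diff_cont}) with constant controls $a$ and $a'$ respectively on $[0,T]$; the Euler iterates in~(\ref{eq:Euler}) converge to these solutions as $h\to 0$, so any bound proved at the continuous level transfers to the discretization used in practice. The argument is a standard Gronwall-type estimate tailored exactly to the two Lipschitz assumptions stated just above the lemma.

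First I would introduce the error process $e(t) := \tau(t)-\tau'(t)$. Differentiating gives $\dot e(t) = f(\tau(t),a) - f(\tau'(t),a')$. Adding and subtracting $f(\tau'(t),a)$ splits the right-hand side into a pure state mismatch and a pure action mismatch, and applying the state- and action-Lipschitz assumptions termwise yields
\begin{equation*}
\norm{\dot e(t)} \le K_s \norm{e(t)} + K_a \norm{a-a'}.
\end{equation*}
Integrating from $0$ to $t$ and using $\norm{e(0)}=\norm{s_0-s'_0}\le \delta_s$ produces the integral inequality
\begin{equation*}
\norm{e(t)} \le \delta_s + K_a\norm{a-a'}\,t + K_s\int_0^t \norm{e(u)}\,du.
\end{equation*}

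The final step is to apply Gronwall's inequality with the nondecreasing forcing term $\alpha(t)=\delta_s + K_a\norm{a-a'}\,t$, which yields $\norm{e(t)}\le \alpha(t)\,e^{K_s t}$. Evaluating at $t=T$ gives exactly the claimed bound $\norm{\tau(T)-\tau'(T)}\le e^{K_s T}\delta_s + TK_a e^{K_s T}\norm{a-a'}$.

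I do not anticipate a real obstacle here: the two Lipschitz hypotheses were designed precisely to let the state mismatch and the action mismatch be isolated after the add-and-subtract trick, and the fact that the action is piecewise constant on $[0,T]$ removes any need to compare time-varying controls or worry about regularity of $\Upsilon(\cdot)$. The only mild care required is to justify that the same estimate applies to the Euler approximation actually executed in~(\ref{eq:Euler}); this follows by passing to the $h\to 0$ limit guaranteed there, since the right-hand side of the bound depends only on the initial states and the constant controls, neither of which involves $h$.
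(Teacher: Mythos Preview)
Your argument is correct: the add-and-subtract split followed by the integral form of Gronwall with nondecreasing forcing $\alpha(t)=\delta_s+K_a\norm{a-a'}t$ yields exactly the stated bound. The paper does not prove this lemma itself but defers to Lemma~2 of \cite{kleinbort2019}, describing it as ``compounding worst-case divergence at every Euler step''; that discrete step-by-step compounding is precisely the Euler-level analogue of your continuous Gronwall estimate, so the two approaches are essentially the same.
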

The proof for Lemma~\ref{lemma:sup:traj_similarity} is given in Lemma 2 of \cite{kleinbort2019}. Intuitively, this bound is derived from compounding worst-case divergence between $\tau$ and $\tau'$ at every Euler step along $T$ which leads to an overall exponential dependence. 

Using Lemma~\ref{lemma:sup:traj_similarity}, we want to provide a lower bound on the probability of choosing an action that will expand the tree successfully. We note that this scenario assumes that actions are drawn uniformly from $\mathcal{A}$, i.e. there is no steering function \footnote{The function $steer: \mathcal{S}\times\mathcal{S} \rightarrow \mathcal{A}$ returns an action $a_{steer}$ given two states $s{rand}$ and $s_{near}$ such that $a_{steer} = \argmin_{a \in \mathcal{A}}\norm{s_{rand}-(s_{near}+\Delta t \cdot f(s_{near}, a))} \: s.t. \: \norm{\Delta t \cdot f(s_{near}, a)} < \eta$, for a pre-specified $\eta > 0$ \cite{Karaman2011}.}. When better estimations of the steering function are available, the performance of RRT significantly improves. 

\begin{mydef}
A trajectory $\tau$ is defined as $\delta$-clear if $\exists\delta_{clear} > 0$ such that $\mathcal{B}_{\delta_{clear}}(\tau(t)) \in \mathcal{S}$  $\quad \forall t \in [0,t_\tau]$.
\end{mydef}

\begin{lemma}\label{lemma:sup:pick_action}
Suppose that $\tau$ is a valid trajectory from $\tau(0)=s_0$ to $\tau(t_\tau)=s_{goal}$ with a duration of $t_\tau$ and a clearance of $\delta$. Without loss of generality, we assume that actions are fixed for all $t\in [0,t_\tau]$, such that $\Upsilon(t)=a \in \mathcal{A}$.

Suppose that RRT expands the tree from a state $s'_{0} \in \mathcal{B}_{(\kappa\delta-\epsilon)}(s_0)$ to a state $s'_{goal}$, for any $\kappa \in (0,1]$ and $\epsilon \in (0,\kappa \delta)$ we can define the following bound:
\begin{equation*}
    \Pr[s'_{goal} \in \mathcal{B}_{\kappa \delta}(s_{goal}))] \geq \frac{\zeta_{\abs{\mathcal{S}}} \cdot \frac{\kappa\delta-e^{K_s t_\tau}(\kappa\delta-\epsilon)}{K_a t_\tau e^{K_s t_\tau}}}{\abs{\mathcal{A}}}.     
\end{equation*}
Here, $\zeta_{\abs{\mathcal{S}}} =\abs{\mathcal{B}_{1}(\cdot)}$ is the Lebesgue measure for a unit circle in $\mathcal{S}$.
\end{lemma}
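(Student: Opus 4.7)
The strategy is a direct application of Lemma~\ref{lemma:sup:traj_similarity}: I will translate the goal-hitting requirement on trajectories into a single geometric requirement on the candidate action $a'$, and then compute the uniform-sampling probability of that action-space event. First, I instantiate Lemma~\ref{lemma:sup:traj_similarity} with the reference trajectory $\tau$ (starting at $s_0$, applying the constant action $a$ for time $T=t_\tau$) and an alternative trajectory $\tau'$ starting at the tree node $s'_0\in \mathcal{B}_{\kappa\delta-\epsilon}(s_0)$ under a constant action $a'\in\mathcal{A}$. Taking $\delta_s=\kappa\delta-\epsilon$, the lemma yields
$$
\|\tau'(t_\tau)-s_{goal}\|\ \leq\ e^{K_s t_\tau}(\kappa\delta-\epsilon)+K_a t_\tau e^{K_s t_\tau}\|a-a'\|.
$$

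Next, I demand that the right-hand side be at most $\kappa\delta$, which is precisely the condition for $s'_{goal}=\tau'(t_\tau)\in\mathcal{B}_{\kappa\delta}(s_{goal})$. Solving for $\|a-a'\|$ gives the admissible action radius
$$
r\ :=\ \frac{\kappa\delta-e^{K_s t_\tau}(\kappa\delta-\epsilon)}{K_a t_\tau e^{K_s t_\tau}},
$$
so that every $a'\in\mathcal{B}_r(a)\cap\mathcal{A}$ is guaranteed to drive $\tau'$ into the $\kappa\delta$-ball around the goal. Under the hypothesis $\epsilon\in(0,\kappa\delta)$ combined with $\kappa\in(0,1]$, the numerator is positive whenever $\epsilon>(1-e^{-K_s t_\tau})\kappa\delta$; in the enclosing completeness argument this is arranged by subdividing the reference trajectory into sufficiently short segments so that the Lipschitz amplification over each segment is controlled.

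Finally, since RRT samples $a'$ uniformly on $\mathcal{A}$, the probability of landing in $\mathcal{B}_r(a)\cap\mathcal{A}$ is bounded below by the Lebesgue volume of the ball (using the paper's notation $\zeta_{|\mathcal{S}|}$ for the unit-ball constant in the relevant dimension, scaled by the radius $r$) divided by $|\mathcal{A}|$, which delivers the stated inequality.

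\paragraph{Main obstacle.} The bookkeeping step I expect to be trickiest is verifying positivity of the admissible radius $r$ and checking that the action ball $\mathcal{B}_r(a)$ lies inside $\mathcal{A}$, so that the uniform-sampling probability is indeed at least the volume ratio and not penalized by boundary effects. The standard remedy, following \cite{kleinbort2019}, is to pre-partition $\tau$ into enough short segments that $K_s\cdot(\text{segment length})$ is small; the present statement treats a single segment, so I would simply note that the hypothesis implicitly constrains $t_\tau$ (relative to $\kappa\delta$ and $\epsilon$) to the regime where $r>0$, and invoke the interior clearance $\delta_{clear}>0$ around the reference trajectory to absorb any boundary issue in $\mathcal{A}$.
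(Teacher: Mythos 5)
Your proposal follows essentially the same route as the paper's proof: apply Lemma~\ref{lemma:sup:traj_similarity} with $\delta_s=\kappa\delta-\epsilon$, solve the resulting inequality for the admissible action radius, and convert that radius into a uniform-sampling probability over $\mathcal{A}$. Your remark that positivity of the radius actually requires $\epsilon>(1-e^{-K_st_\tau})\kappa\delta$ (handled by shortening the segment duration) is in fact more careful than the paper's own claim that $\epsilon\in(0,\kappa\delta)$ alone suffices; the paper defers this to the main theorem, where it imposes $K_s\Delta t<\log\bigl(\kappa\delta/(\kappa\delta-\epsilon)\bigr)$.
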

\begin{proof}
We denote $\tau'$ a trajectory that starts from $s'_{0}$ and is expanded with an applied random action $a_{rand}$. According to Lemma~\ref{lemma:sup:traj_similarity}, 
\begin{align*}
    \norm{\tau(t)-\tau'(t)} &\leq e^{K_st}\delta_s + tK_a e^{K_st}\norm{a-a_{rand}}\\
    &\leq e^{K_st}(\kappa\delta-\epsilon) + tK_a e^{K_st}\norm{a-a_{rand}},
\end{align*}
for all $t \in [0,t_\tau]$, where $\delta_s \leq \kappa\delta-\epsilon$ since $s'_{0} \in \mathcal{B}_{\kappa\delta-\epsilon}(s_0)$. Now, we want to find $\norm{a-a_{rand}}$ such that the distance between the goal points of these trajectories, i.e. in the worst-case scenario, is bounded:
\begin{equation*}
e^{K_s t_\tau}(\kappa\delta-\epsilon) + t_\tau K_a e^{K_st_\tau}\norm{a-a_{rand}} < \kappa \delta.
\end{equation*}
After rearranging this formula, we can obtain a bound for $\norm{a-a_{rand}}$:
\begin{equation*}
 \Delta a = \norm{a-a_{rand}} < \frac{\kappa \delta - e^{K_s t_\tau}(\kappa\delta-\epsilon)}{t_\tau K_a e^{K_st_\tau}}.
\end{equation*}
Assuming that $a_{rand}$ is drawn out of a uniform distribution, the probability of choosing the proper action is 
\begin{equation}\label{eq:p_a_limit}
    p_a = \frac{\zeta_{\abs{\mathcal{S}}} \cdot \frac{\kappa \delta - e^{K_s t_\tau}(\kappa\delta-\epsilon)}{t_\tau K_a e^{K_st_\tau}}}{\abs{\mathcal{A}}},
\end{equation}
where $\zeta_{\abs{\mathcal{S}}}$ is used to account for the degeneracy in action selection due to the dimensionality of $\mathcal{S}$. We note that $\epsilon \in (0,\kappa \delta)$ guarantees $p_a \geq 0$, thus the probability is valid.
\end{proof}

Equation~\ref{eq:p_a_limit} provides a lower bound for the probability of choosing the suitable action. The following lemma provides a bound on the probability of randomly drawing a state that will expand the tree toward the goal.
\begin{lemma}\label{lemma:transition}
Let $s \in \mathcal{S}$ be a state with clearance $\delta$, i.e. $\mathcal{B}_{\delta}(s) \in \mathcal{S}$. Suppose that for an RRT $\mathbb{T}$ there exists a vertex $v \in \mathbb{T}$ such that $v \in \mathcal{B}_{2\delta/5}(s)$. Following the definition in Section~\ref{sec:rrt}, we denote $s_{near} \in \mathbb{T}$ as the closest vertex to $s_{rand}$. The probability that $s_{near} \in \mathcal{B}_{\delta}(s)$ is at least $\abs{\mathcal{B}_{\delta/5}(s)}/\abs{S}$.
\end{lemma}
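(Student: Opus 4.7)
The plan is to use a short triangle-inequality argument together with the fact that $s_{rand}$ is sampled uniformly from $\mathcal{S}$. The key observation is that to guarantee $s_{near}$ lies in $\mathcal{B}_\delta(s)$, it suffices to sample $s_{rand}$ in a small enough ball around $s$; the known vertex $v \in \mathcal{B}_{2\delta/5}(s)$ then acts as a ``witness'' bounding how far $s_{near}$ can be from $s_{rand}$.

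Concretely, I would first restrict attention to the event $s_{rand} \in \mathcal{B}_{\delta/5}(s)$. Under this event, the triangle inequality gives $\|v - s_{rand}\| \leq \|v - s\| + \|s - s_{rand}\| \leq 2\delta/5 + \delta/5 = 3\delta/5$. Since $s_{near}$ is defined as the vertex of $\mathbb{T}$ closest to $s_{rand}$, and $v$ is a candidate, we must have $\|s_{near} - s_{rand}\| \leq \|v - s_{rand}\| \leq 3\delta/5$. A second application of the triangle inequality then yields
\begin{equation*}
\|s_{near} - s\| \leq \|s_{near} - s_{rand}\| + \|s_{rand} - s\| \leq \tfrac{3\delta}{5} + \tfrac{\delta}{5} = \tfrac{4\delta}{5} < \delta,
\end{equation*}
so $s_{near} \in \mathcal{B}_\delta(s)$ as required. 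This shows the inclusion of events $\{s_{rand} \in \mathcal{B}_{\delta/5}(s)\} \subseteq \{s_{near} \in \mathcal{B}_\delta(s)\}$.

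To conclude, I would then lower-bound the probability of the triggering event. Since $s_{rand}$ is drawn uniformly from $\mathcal{S}$ (Assumption~\ref{as:1}), and because $\mathcal{B}_{\delta/5}(s) \subseteq \mathcal{B}_\delta(s) \subseteq \mathcal{S}$ by the clearance hypothesis, we have
\begin{equation*}
\Pr[s_{rand} \in \mathcal{B}_{\delta/5}(s)] = \frac{\abs{\mathcal{B}_{\delta/5}(s)}}{\abs{\mathcal{S}}}.
\end{equation*}
Combining with the event inclusion above gives the stated bound $\Pr[s_{near} \in \mathcal{B}_\delta(s)] \geq \abs{\mathcal{B}_{\delta/5}(s)}/\abs{\mathcal{S}}$.

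There is no real obstacle here; the only subtlety is making sure the constants $2\delta/5$ and $\delta/5$ are chosen so that the triangle inequality closes strictly inside $\mathcal{B}_\delta(s)$, which the ratios in the statement are tuned to do. I would just flag that the clearance assumption is what ensures $\mathcal{B}_{\delta/5}(s)$ is an admissible sampling region, so the probability ratio is well defined.
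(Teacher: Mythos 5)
Your proof is correct and follows essentially the same route as the paper's: condition on $s_{rand}\in\mathcal{B}_{\delta/5}(s)$, use $v$ as a witness together with the triangle inequality to force $s_{near}\in\mathcal{B}_{4\delta/5}(s)\subset\mathcal{B}_{\delta}(s)$, and lower-bound the probability of the conditioning event by the volume ratio. Your write-up is in fact slightly more explicit than the paper's in spelling out the second triangle-inequality step.
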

\begin{proof}
Let $s_{rand} \in \mathcal{B}_{\delta/5}(s)$. Therefore the distance between $s_{rand}$ and $v$ is upper-bounded by $\norm{s_{rand}-v} \leq  3\delta/5$. If there exists a vertex $s_{near}$ such that $s_{near} \neq v$ and $\norm{s_{rand}-s_{near}} \leq \norm{s_{rand}-v}$, then $s_{near} \in \mathcal{B}_{3\delta/5}(s_{rand}) \subset	\mathcal{B}_{\delta}(s)$. Hence, by choosing $s_{rand} \in \mathcal{B}_{\delta/5}(s)$, we are guaranteed $s_{near} \in \mathcal{B}_{\delta}(s)$. As $s_{rand}$ is drawn uniformly, the probability for $s_{rand} \in \mathcal{B}_{\delta/5}(s)$ is at least $\abs{\mathcal{B}_{\delta/5}(s)}/\abs{S}$.    
\end{proof}

We can now prove the main theorem.
\begin{customthm}{1}\label{theorem:sup:rrt}
	 Suppose that there exists a valid trajectory $\tau$ from $s_0$ to $\mathcal{S}_{goal}$ as defined in definition~\ref{def:traj}, with a corresponding piecewise constant control. The probability that RRT fails to reach $\mathcal{S}_{goal}$ from $s_0$ after $k$ iterations is bounded by $a e^{-bk}$, for some constants $a,b > 0$.
\end{customthm}
\begin{proof}
Lemma~\ref{lemma:sup:pick_action} puts a bound on the probability to find actions that expand the tree from one state to another in a given time. As we assume that a valid trajectory exists, we can assume that the probability defined in Lemma~\ref{lemma:sup:pick_action} is non-zero, i.e. $p_a > 0$, hence:
\begin{equation}\label{eq:time_limit}
    \kappa \delta - e^{K_s \Delta t}(\kappa\delta-\epsilon) > 0,
\end{equation}
where we set $\kappa = 2/5$ and $\epsilon = 5^{-2}$ as was also done in \cite{kleinbort2019}. We additionally require that $\Delta t$, which is typically defined as an RL environment parameter, is chosen accordingly so to ensure that Equation~(\ref{eq:time_limit}) holds, i.e. $K_s \Delta t < \log\left(\frac{\kappa \delta}{\kappa \delta - \epsilon}\right)$.

We cover $\tau$ with balls of radius $\delta = \min\{\delta_{goal}, \delta_{clear}\}$, where $\mathcal{B}_{\delta_{goal}} \subseteq \mathcal{S}_{goal}$. The balls are spaced equally in time with the center of the $i^{th}$ ball is in $c_i = \tau(\Delta t \cdot i), \forall i=0:m$, where $m=t_\tau/\Delta t$. Therefore, $c_0= s_0$ and $c_m=s_{goal}$. We now examine the probability of RRT propagating along $\tau$. Suppose that there exists a vertex $v \in \mathcal{B}_{2\delta/5}(c_i)$. We need to bound the probability $p$ that, by taking a random sample $s_{rand}$, there will be a vertex $s_{near} \in \mathcal{B}_{\delta}(c_i)$ such that $s_{new} \in \mathcal{B}_{2\delta/5}(c_{i+1})$. Lemma~\ref{lemma:transition} provides a lower bound for the probability that $s_{near} \in \mathcal{B}_{\delta}(c_i)$, given that there exists a vertex $v \in \mathcal{B}_{2\delta/5}(c_i)$. The bound is $\abs{\mathcal{B}_{\delta/5}(s)}/\abs{S}$. Lemma~\ref{lemma:sup:pick_action} provide a lower bound for the probability of choosing an an action from $s_{near}$ to $s_{new}$. This bound is $\rho \equiv  \frac{\zeta_{\abs{\mathcal{S}}} \cdot \frac{\kappa \delta - e^{K_s \Delta t}(\kappa\delta-\epsilon)}{\Delta t K_a e^{K_s \Delta t}}}{\abs{\mathcal{A}}} > 0$, where we  have substituted $t_\tau$ with $\Delta t$. Consequently, $p \geq (\abs{\mathcal{B}_{\delta/5}(s)}\cdot \rho)/\abs{S}$.

For RRT to recover $\tau$, the transition between consecutive circles must be repeated $m$ times. This stochastic process can be described as a binomial distribution, where we perform $k$ trials (randomly choosing $s_{rand}$), with $m$ successes (transition between circles) and a transition success probability $p$. The probability mass function of a binomial distribution is 
$\Pr(X=m)= \Pr(m;k,p) = \binom{k}{m}p^{m}(1-p)^{k-m}$.
We use the cumulative distribution function (CDF) to represent the upper bound for failure, i.e. the process was unable to perform $m$ steps, which can be expressed as:
\begin{equation}
    \Pr(X < m)=\sum_{i=0}^{m-1}\binom{k}{i}p^{i}(1-p)^{k-i}.
\end{equation}
Using Chernoff's inequality we derive the tail bounds of the CDF when $m \leq p\cdot k$:
\begin{align}
    \Pr(X < m) &\leq \exp \left(-\frac {1}{2p}\frac{(kp-m)^{2}}{k}\right)\\
    &=\exp \left(-\frac {1}{2}kp+m-\frac{m^{2}}{kp}\right)\\
    &\leq e^{m}e^{-\frac {1}{2}pk} = ae^{-bk}. \label{eq:greta}
\end{align}
In the other case, where $p < m/k < 1$, the upper bound is given by \cite{Arratia1989}: 
\begin{align}\label{eq:bound2}
    \Pr(X < m) &\leq \exp \left(-k\mathcal{D}\left({\frac {m}{k}}\parallel p\right)\right),
\end{align}
where $\mathcal{D}$ is the relative entropy such that 
\begin{equation*}
    D\left({\frac {m}{k}}\parallel p\right) = \frac{m}{k}\log {\frac {\frac {m}{k}}{p}}+(1-\frac {m}{k})\log {\frac {1-\frac {m}{k}}{1-p}}.
\end{equation*}
Rearranging $\mathcal{D}$, we can rewrite \ref{eq:bound2} as follows:
\begin{align}
    &\Pr(X < m) \leq\\
    &\exp \left(-k \left({\frac{m}{k}}\log\left(\frac{m}{kp}\right)+\frac{k-m}{k}\log\left({\frac {1-\frac{m}{k}}{1-p}}\right)\right)\right)\\
    &= \exp\left(-m\log\left(\frac{m}{kp}\right)\right)\exp\left(-k\log\left(\frac {1-\frac{m}{k}}{1-p}\right)\right)\\
    &\qquad*\exp\left(m\log\left(\frac {1-\frac{m}{k}}{1-p}\right)\right)\\
    &=\exp\left(-m\log\left(\frac{m(1-p)}{kp(1-\frac{m}{k})}\right)\right)\\
    &\qquad*\exp\left(-k\log\left(\frac {1-\frac{m}{k}}{1-p}\right)\right) \\
    & \leq \exp\left(-k\log\left(\frac {1-\frac{m}{k}}{1-p}\right)\right) \label{eq:jenny}\\
    & \leq \exp\left(-k\log\left(\frac {0.5}{1-p}\right)\right) \label{eq:david}\\
    & \leq e^{-kp} = ae^{-bk}\label{eq:zupko},
\end{align}
where (\ref{eq:jenny}) is justified for worst-case scenario where $p=m/k$, (\ref{eq:david}) uses the fact that $p < m/k < 0.5$, hence $ 1- m/k > 0.5$. The last step, (\ref{eq:zupko}) is derived from the first term of the Taylor expansion of $\log\left(\frac{1}{1-p}\right) = \sum_{j=1}^{\infty} \frac{p^j}{j}$.

As $p$ and $m$ are fixed and independent of $k$, we show that the expression for $\Pr(X <m)$ decays to zero exponentially with $k$, therefore RRT is probabilistically complete. 
\end{proof}

It is worth noting that the failure probability $\Pr(X <m)$ depends on problem-specific properties, which give rise to the values of $a$ and $b$. Intuitively, $a$ depends on the scale of the problem such as volume of the goal set $\abs{\mathcal{S}^{RL}_{goal}}$ and how complex and long the solution needs to be, as evident in Equation~(\ref{eq:greta}). More importantly, $b$ depends on the probability $p$. Therefore, it is a function of the dimensionality of $\mathcal{S}$ (through the probability of sampling $s_{rand}$) and other environment parameters such as clearance (defined by $\delta$) and dynamics (via $K_s$, $K_a$), as specified in Equation~(\ref{eq:p_a_limit}).


\section{Experimental setup}
\label{sec:sup:experimental_setup}
All experiments were run using a single $2.2$GHz core and a GeForce GTX 1080 Ti GPU.

\subsection{Environments}
\label{sec:sup:environments}
All environments are made available in supplementary code.
Environments are based on Gym~\cite{gym}, with modified sparse reward functions and state spaces. All environments emit a $-1$ reward per timestep unless noted otherwise. The environments have been further changed from Gym as follows:
\begin{itemize}
	\item \emph{Cartpole Swingup}-
    The state space $\mathcal{S} \subseteq \mathbb{R}^4$ consists of states $s=\left[x, \theta, \dot{x}, \dot{\theta}\right]$ where $x$ is cart position, $\dot{x}$ is cart linear velocity, $\theta$ is pole angle (measuring from the $y$-axis) and $\dot{\theta}$ pole angular velocity. Actions $\mathcal{A} \subseteq \mathbb{R}$ are force applied on the cart along the $x$-axis. The goal space $\mathcal{S}_{goal}$ is $\{s \in \mathcal{S} \mid \cos{\theta} > 0.9 \}$. Note that reaching the goal space does not terminate an episode, but yields a reward of $\cos{\theta}$. Time horizon is $H=500$.  Reaching the bounds of the rail does not cause failure but arrests the linear movement of the cart.

	\item \emph{MountainCar}- 
    The state space $\mathcal{S} \subseteq \mathbb{R}^2$ consists of states $s=\left[x, \dot{x}\right]$ where $x$ is car position and  $\dot{x}$ is car velocity. Actions $\mathcal{A} \subseteq \mathbb{R}$ are force applied by the car engine.  The goal space $\mathcal{S}_{goal}$ is $\{s \in \mathcal{S} \mid x \ge 0.45 \}$. Time horizon is $H=200$.

	\item \emph{Acrobot}-
    The state space $\mathcal{S} \subseteq \mathbb{R}^4$ consists of states $s=\left[\theta_0, \theta_1, \dot{\theta_0}, \dot{\theta_1}\right]$ where $\theta_0, \theta_1$ are the angles of the joints (measuring from the $y$-axis and from the vector parallel to the $1^{st}$ link, respectively) and $\dot{\theta_0}, \dot{\theta_1}$ are their angular velocities. Actions $\mathcal{A} \subseteq \mathbb{R}$ are torque applied on the $2^{nd}$ joint. The goal space $\mathcal{S}_{goal}$ is $\{s \in \mathcal{S} \mid -\cos{\theta_0} - \cos{(\theta_0 + \theta_1)} > 1.9\}$. In other words, the set of states where the end of the second link is at a height $y > 1.9$. Time horizon is $H=500$.

	\item \emph{Pendulum}-
	The state space $\mathcal{S} \subseteq \mathbb{R}^2$ consists of states $s=\left[\theta, \dot{\theta}\right]$ where $\theta$ is the joint angle (measured from the $y$-axis) and $\dot{\theta}$ is the joint angular velocity. Actions $\mathcal{A} \subseteq \mathbb{R}$ are torque applied on the joint. The goal space $\mathcal{S}_{goal}$ is $\{s \in \mathcal{S} \mid \cos{\theta} > 0.99 \}$. Note that reaching the goal space does not terminate an episode, but yields a reward of $\cos{\theta}$. Time horizon is $H=100$.

	\item \emph{Reacher}-
    The state space $\mathcal{S} \subseteq \mathbb{R}^6$ consists of states $s=\left[\theta_0, \theta_1, x, y \dot{\theta_0}, \dot{\theta_1}\right]$ where $\theta_0, \theta_1$ are the angles of the joints, $(x,y)$ are the coordinates of the target and $\dot{\theta_0}, \dot{\theta_1}$ are the joint angular velocities. Actions $\mathcal{A} \subseteq \mathbb{R}^2$ are torques applied at the $2$ joints. The goal space $\mathcal{S}_{goal}$ is the set of states where the end-effector is within a distance of $0.01$ from the target. Time horizon is $H=50$.
    
    \item \emph{Fetch Reach}-
    A high-dimensional robotic task where the state space $\mathcal{S} \subseteq \mathbb{R}^{13}$ consists of states
    $s=[gripper\_pos, finger\_pos, gripper\_state,$ $finger\_state, goal\_pos]$ where the Cartesian coordinates and velocities of the Fetch robot's gripper are $gripper\_pos$ and $gripper\_vel$, and $finger\_state$ and $finger\_vel$ are the two-dimensional position and velocity of the gripper fingers, and $goal\_pos$ are the Cartesian coordinates of the goal. Actions $\mathcal{A} \subseteq \mathbb{R}^4$ are relative target positions of the gripper and fingers, which the MuJoCo controller will try to achieve. The goal space $\mathcal{S}_{goal}$ is the set of states where the end-effector is within a distance of $0.05$ from $goal\_pos$. Time horizon is $H=50$.
    \linebreak Note that this problem is harder than the original version in OpenAI Gym, as we only sample $goal\_pos$ that are far from the gripper's initial position.
    
    \item \emph{Hand Reach}-
    A high-dimensional robotic task where the state space $\mathcal{S} \subseteq \mathbb{R}^{78}$ consists of states $s=\left[joint\_pos, joint\_vel, fingertip\_pos, goal\_pos\right]$ where $joint\_pos, joint\_vel$ are the angles and angular velocities of the Shadow hand's $24$ joints, $fingertip\_pos$ are the Cartesian coordinates of the $5$ fingertips, and $goal\_pos$ are the Cartesian coordinates of the goal positions for each fingertip. Actions $\mathcal{A} \subseteq \mathbb{R}^{20}$ are absolute target angles of the $20$ controllable joints, which the MuJoCo controller will try to achieve. The goal space $\mathcal{S}_{goal}$ is the set of states where all fingertips are simultaneously within a distance of $0.02$ from their respective goals. Time horizon is $H=50$.
\end{itemize}

\subsection{Theoretical solvers}
\label{sec:sup:theoretical_solvers}
LfD algorithms typically use demonstrations from a human demonstrator, a previously learned policy, or a theoretical solver. In order to generate expert demonstrations of this type for Figure~\ref{fig:demonstration_results}, we used the following solvers:

\begin{itemize}
    \item \emph{MountainCar}-
    The solver is simply the policy
    \begin{equation}
        a = \sgn(\dot{x}).
    \end{equation}
    This will pump the maximum possible energy into the system at every time step. Although this solver overshoots slightly, it is guaranteed to solve the problem and is in practice very close to optimal.
    
    \item \emph{Pendulum}-
    The solver is the policy
    \begin{equation}
        a = \sgn(\dot{\theta}) \cdot e_{goal}(s) - e_{total}(s),
    \end{equation}
    where $e_{total}$ is the total energy of the system in state $s$ and $e_{goal}$ is the total energy of the system in the goal state $(0, 0)$.
    This policy is able to arrive at the upright position with low velocity. Note that the P2D2 algorithm does not have velocity information in its goal space, so this comparison unfairly favours the theoretical solver.
    
    \item \emph{Acrobot}-
    We used the solver described by~\cite{spong1994swing}.
\end{itemize}

\subsection{Experimental setup and hyper-parameter choices}
\label{sec:sup:exp_setup_hyper_param}
All experiments feature a policy with $2$ fully-connected hidden layers of $32$ units each with tanh activation, with the exception of \emph{Reacher}, for which a policy network of $4$ fully-connected hidden layers of $128$ units each with relu activation is used. For all environments we use a linear feature baseline for TRPO.

Default values are used for most hyperparameters.
A discount factor of $\gamma=0.99$ is used in all environments.
For VIME, hyperparameters values reported in the original paper are used, and the implementation published by the authors was used.

For TRPO, default hyperparameter values and implementation from Garage are used: KL divergence constraint $\delta=10^{-2}$, and Gaussian action noise $\mathcal{N}(0,0.3^2)$.

In comparisons with VIME-TRPO and vanilla TRPO, the P2D2 goal sampling probability $p_g$ is set to $0.05$, as proposed in~\cite{Urmson2003}.
Goal sets $\mathcal{S}_{goal}$ are defined in Appendix~\ref{sec:sup:environments} for each environment. In all experiments, the local policy $\pi_l$ learned by P2D2 is an approximate Gaussian process, combining Bayesian linear regression with prior precision $0.1$ and noise precision $1.0$, with $300$ random Fourier features~\cite{rahimi2008random} approximating a square exponential kernel with lengthscale $0.3$.

\end{document}